\pdfoutput=1
\documentclass{article} %
\usepackage{iclr2027_conference,times}

\newif\ifarxiv
\arxivtrue
\ifarxiv\iclrfinalcopy\fi

\usepackage[ruled,vlined]{algorithm2e}
\usepackage{caption}
\usepackage{subcaption}
\usepackage{graphicx}
\usepackage{float}
\usepackage{todonotes}
\usepackage{amsmath, amsthm}
\newtheorem{lemma}{Lemma}
\newcommand{\enc}{\operatorname{enc}}
\newcommand{\dec}{\operatorname{dec}}

\usepackage[utf8]{inputenc} %
\usepackage[T1]{fontenc}    %
\usepackage{hyperref}       %
\usepackage{url}            %
\usepackage{booktabs}       %
\usepackage{amsfonts}       %
\usepackage{nicefrac}       %
\usepackage{microtype}      %
\usepackage{xcolor}         %

\definecolor{tabBlue}{RGB}{31,119,180}
\definecolor{tabOrange}{RGB}{255,127,14}
\definecolor{tabRed}{RGB}{214,39,40}
\definecolor{tabGray}{RGB}{127,127,127}

\hypersetup{colorlinks=true, citecolor=blue, linkcolor=blue, urlcolor=blue}

\usepackage{amsmath,amsfonts,bm}

\def\eqref#1{equation~\ref{#1}}

\def\1{\bm{1}}

\def\vc{{\bm{c}}}

\def\ve{{\bm{e}}}

\def\vg{{\bm{g}}}

\def\vk{{\bm{k}}}

\def\vq{{\bm{q}}}
\def\vr{{\bm{r}}}

\def\vv{{\bm{v}}}
\def\vw{{\bm{w}}}
\def\vx{{\bm{x}}}
\def\vy{{\bm{y}}}
\def\vz{{\bm{z}}}

\def\mA{{\bm{A}}}

\def\mC{{\bm{C}}}

\def\mG{{\bm{G}}}
\def\mH{{\bm{H}}}
\def\mI{{\bm{I}}}

\def\mM{{\bm{M}}}

\def\mP{{\bm{P}}}

\def\mS{{\bm{S}}}

\def\mW{{\bm{W}}}

\DeclareMathAlphabet{\mathsfit}{\encodingdefault}{\sfdefault}{m}{sl}
\SetMathAlphabet{\mathsfit}{bold}{\encodingdefault}{\sfdefault}{bx}{n}

\newcommand{\R}{\mathbb{R}}

\DeclareMathOperator*{\argmin}{arg\,min}

\DeclareMathOperator{\Tr}{Tr}

\title{DeltaNet-PC: Test-Time Memory Without Backpropagation Through Time}

\author{Todd Morrill \\
Computer Science, Columbia University \\
New York, NY, USA \\
\texttt{todd@cs.columbia.edu}
\And
Richard Zemel \\
Computer Science, Columbia University \\
New York, NY, USA \\
\texttt{zemel@cs.columbia.edu}
}

\begin{document}

\maketitle
\ifarxiv
\lhead{}
\renewcommand{\headrulewidth}{0pt}
\fi

\begin{abstract}
Test-time memorization of sequential information is a foundational capability for any intelligent system that needs to rapidly store and retrieve data, as demonstrated by the attention mechanism in large language models. However, Transformers and linear attention variants are trained via backpropagation through time (BPTT), which has drawbacks such as its enormous memory requirements.
Predictive coding (PC)---an algorithm that optimizes a prediction-error energy function with local updates---offers a compelling solution but still lacks test-time memory for sequential data. We develop a solution to this problem by treating the per-step memory update $\Delta\mS_t$ as a latent variable, as opposed to a slow weight to be learned. By taking the log probability of a standard PC-based probabilistic model, we derive a PC-native solution for verbatim sequence memory, and relate our update rule to common choices in the linear attention literature (e.g., DeltaNet). 
We present an extensive set of empirical tests to probe our system's properties, including its scaling capacity, memory footprint, and ability to infer a context from evidence (e.g., bank (river) vs. bank (money)). We show how DeltaNet-PC can be used for storing single sequences of data, multiple sequences of data, and key-value associative memory pairs with perfect recall accuracy.
\end{abstract}

\section{Introduction}
\label{sec:intro}
Test-time memorization of sequential information is a foundational capability for intelligent biological and artificial systems. Much of the success of modern large language models (LLMs) can be attributed to test-time adaptivity \citep{vonoswald2023}, in particular the self-attention mechanism found in Transformers \citep{vaswani2017}, and more recently the linear attention mechanism found in hybrid sequence models \citep{katharopoulos2020,schlag2021,sun2023,yang2024,kimiteam2026}.

\begin{figure*}
\centering
\includegraphics[width=\textwidth]{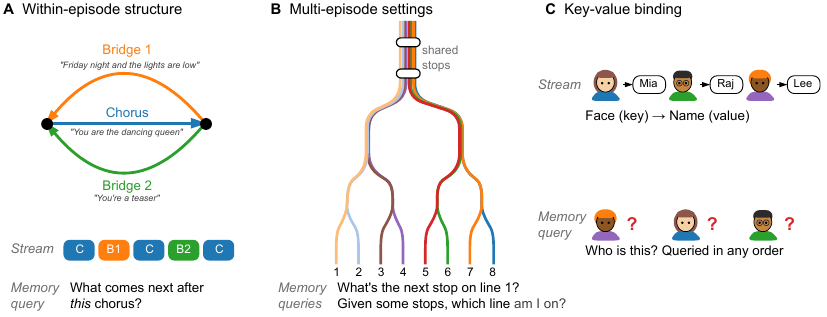}
\caption{We evaluate our system on several sequential memory tasks. \textbf{A:} Within-episode structure tests the model's ability to handle non-Markovian dynamics in a single sequence. \textbf{B:} Multi-episode settings test the system's ability to memorize and reason about multiple sequences, particularly with highly overlapping subsequences. \textbf{C:} Key-value binding tests whether the system can retrieve a value when cued with a key.}
\label{fig:episodic_capabilities}
\end{figure*}

However, these systems are trained using backpropagation through time (BPTT) \citep{werbos1990}, which has some well-known drawbacks: memory that grows with the sequence length \citep{marschall2020}, and the inability to learn in streaming scenarios without locking the system to wait for a backward pass to finish \citep{ororbia2023a}.
This motivates an algorithm that relies on updates that are local in both space and time.

Predictive coding is one of the most successful backpropagation alternatives \citep{rao1999,whittington2017,pinchetti2025}, and its parameter updates are local in space (i.e., Hebbian, only requiring pre- and postsynaptic activity) \citep{bogacz2017}.
Prior PC-based works have developed methods for sequential modeling \citep{tang2023} and continual updates \citep{yoo2022}, though they lack what is arguably one of the most important parts of a Transformer: some notion of test-time memorization for sequential data. Self-attention, however, uses an unbounded key-value cache---extremely taxing for computers and a resource that brains seem to succeed without---which motivates our focus on a fixed-size memory.

In brief, we are storing pairs: a key $\vk_t$ and a value $\vv_t$, which are bound together in a memory $\mS$ at time $t$.
We connect predictive coding to the linear attention literature (e.g., DeltaNet \citep{schlag2021}) through a key insight: once you treat matrix-valued updates $\Delta\mS_t$ to the memory as latent variables, as opposed to slowly learned parameters, predictive coding natively provides linear attention machinery.

Our tasks include three episodic memory scenarios, such as memorizing a song, where the chorus is followed by different bridges, requiring non-Markovian memory (Figure~\ref{fig:episodic_capabilities}A), and putting names to faces (key-value binding, Figure~\ref{fig:episodic_capabilities}C). Our system also faithfully stores memories with a high degree of overlap, like a metro map, and can infer the line given some observed stops (Figure~\ref{fig:episodic_capabilities}B). We evaluate on community benchmarks such as MQAR \citep{arora2024}, and image-based storage and retrieval tasks.

Mainstream BPTT architectures (e.g., Transformers, SSMs, etc.) solve verbatim recall easily and we claim no accuracy advantage over them, but they violate our locality and/or fixed-size memory constraints. We compare to fixed memory capacity baselines (i.e., not Transformers, whose key-value cache is unbounded), including a BPTT-trained DeltaNet model \citep{yang2024c}, and several outer-product associative memory networks.

Our contributions are as follows: (i) a sequential PC model that binds new memories at test time, which uses updates that are local in both space and time and has a fixed-size memory footprint, (ii) a write rule derived from treating the memory update as an inferred latent variable, which can be related to common choices in the linear attention literature, and (iii) empirical results demonstrating our system's ability to match the perfect recall accuracy of BPTT-trained linear attention models at a fraction of the training memory cost, while overcoming the limitations of models that can't learn (e.g., static outer-product models). 

\section{Background \& Related Work}
\label{sec:background}

\subsection{Predictive coding}
Predictive coding (PC) is an algorithm for \emph{inference} and \emph{learning} that optimizes a prediction-error energy function \citep{rao1999,whittington2017,pinchetti2025}.
In its most common form, each layer of the network has a latent variable $\vz$ that is optimized to minimize a global energy function---a process called inference (also referred to algorithmically as settling). Once the latent variables have settled, the weights are updated to minimize the same energy function---learning. A standard PC assumption is that each layer's latent is a Gaussian random variable with a mean given by the output of the previous layer. This allows one to write down a probabilistic model and derive a resulting energy function from the log probability. For example, the probability distribution, and resulting energy function, for an $\ell$-layer feedforward network with latents $(\vz_0=\vx), \vz_1, \ldots, \vz_{\ell}$, weights $\mW_1, \ldots, \mW_{\ell}$, and activation function $g$ are
\begin{gather}
\label{eq:pc_model}
p(\vz_0, \vz_1, \ldots, \vz_{\ell}) = \prod_{i=1}^{\ell} p(\vz_i \mid \vz_{i-1}),
\quad
p(\vz_i \mid \vz_{i-1}) = \mathcal{N}(\vz_i; \mW_i g(\vz_{i-1}), \mathbf{I}), \nonumber \\
\label{eq:pc_energy}
E(\vz_0, \vz_1, \ldots, \vz_{\ell}) = -\log p(\vz_0, \vz_1, \ldots, \vz_{\ell}) = \frac12\sum_{i=1}^{\ell} \lVert \vz_i - \mW_i g(\vz_{i-1})\rVert^2.
\end{gather}

\citet{tang2023} developed tPC for sequential modeling, which can memorize \emph{training} data in the slow weights of their network, but they do not treat the per-step update as a latent variable so their model does not perform test-time memorization. \citet{yoo2022} developed BayesPCN, a PC-based continual learning system, but they do not address sequential data.

\subsection{Linear attention}
Linear attention is a mechanism for performing attention with a constant memory footprint \citep{katharopoulos2020}, in contrast to the key-value cache that grows linearly in the sequence length for standard attention \citep{vaswani2017}. It was initially conceptualized as a sum of outer product memories \citep{katharopoulos2020} given by $\mS_t = \mS_{t-1} +  \vv_t \vk_t^{\top}$, where $\mS$ is a linear memory, $\vv_t$ is a value, and $\vk_t$ is a key. The readout of the memory is given by $\mS \vq_t$, where $\vq_t$ is a query. This style of memory has fixed capacity---proportional to the dimensionality of the key space---and is prone to interference, which motivated DeltaNet \mbox{\citep{schlag2021}}, with a learnable write gate $\beta_t$ and an erase term to reduce interference: $\mS_t = \mS_{t-1} + \beta_t(\vv_t - \mS_{t-1}\vk_t)\vk_t^{\top}$. This is one gradient step on $\mathcal{L}_t(\mS) = \frac12\lVert \vv_t - \mS\vk_t\rVert^2$ from $\mS_{t-1}$ with step size $\beta_t$---the least mean squares (LMS) rule \citep{widrow1960}. With L2-normalized keys \citep{yang2024c} it is also normalized LMS (NLMS), which divides the step by $\lVert\vk_t\rVert^2$.

More recent work has continued to iterate on this core idea, including the use of forgetting mechanisms \citep{yang2024}, more complex key-value bindings \citep{sun2025b}, and the use of recursive least squares (RLS) for key-value bindings \citep{vonoswald2024}. Table~\ref{tab:related_work} situates DeltaNet-PC in the context of prior works.

\providecommand{\cmk}{\textcolor{green}{\checkmark}}
\providecommand{\xmk}{\textcolor{red}{$\times$}}
\providecommand{\nak}{--}
\begin{table}[h]
\centering
\small
\setlength{\tabcolsep}{4pt}
\caption{Situating DeltaNet-PC. Memory is \emph{one layer's} footprint; $L$ the sequence length and $D$ the latent space  dimensionality. Local: weight updates use only pre- and postsynaptic activity (space) and only the current step (time). Sequential: stores ordered sequences. Trained: the system learns the keys and values, rather than given as raw data. Test-time: an episode is stored in one pass with no gradient steps on the weights. See Appendix~\ref{app:related} for an extended related work section and Appendix~\ref{app:linear_attention}, which relates our work to linear attention.}
\label{tab:related_work}
\resizebox{\ifdim\width>\textwidth\textwidth\else\width\fi}{!}{%
\begin{tabular}{@{}llccccccc@{}}
\toprule
& & Training & Inference & Local & Local & Sequential & Trained & Test-time \\
Model & Write rule & memory & memory & in space & in time & data & keys, values & write \\
\midrule
Transformer \citep{vaswani2017} & append (KV cache) & $LD$ & $LD$ & \xmk & \xmk & \cmk & \cmk & \cmk \\
Linear attention \citep{katharopoulos2020} & outer product & $LD$ & $D^2$ & \xmk & \xmk & \cmk & \cmk & \cmk \\
DeltaNet \citep{schlag2021,yang2024c} & delta rule (LMS/NLMS) & $LD$ & $D^2$ & \xmk & \xmk & \cmk & \cmk & \cmk \\
Mesa layer \citep{vonoswald2024,vonoswald2026} & RLS & $LD$ & $D^2$ & \xmk & \xmk & \cmk & \cmk & \cmk \\
Lin. assoc. memory \citep{kohonen1972,kohonen1974} & outer product / RLS & \nak & $D^2$ & \cmk & \nak & \xmk & \xmk & \cmk \\
tPC \citep{tang2023} & delta rule (LMS) & $D^2$ & $D^2$ & \cmk & \cmk & \cmk & \cmk & \xmk \\
BayesPCN \citep{yoo2022} & RLS (Bayesian) & \nak & $D^2$ & \cmk & \nak & \xmk & \xmk & \cmk \\
DeltaNet-PC (ours) & RLS (inferred $\Delta\mS_t$) & $D^2$ & $D^2$ & \cmk & \cmk & \cmk & \cmk & \cmk \\
\bottomrule
\end{tabular}}
\end{table}

\section{Methods}
\label{sec:methods}
We first define our probabilistic model and then derive its resulting energy function. We then describe the complete architecture and training procedure.

\begin{figure*}
\centering
\includegraphics[width=\textwidth]{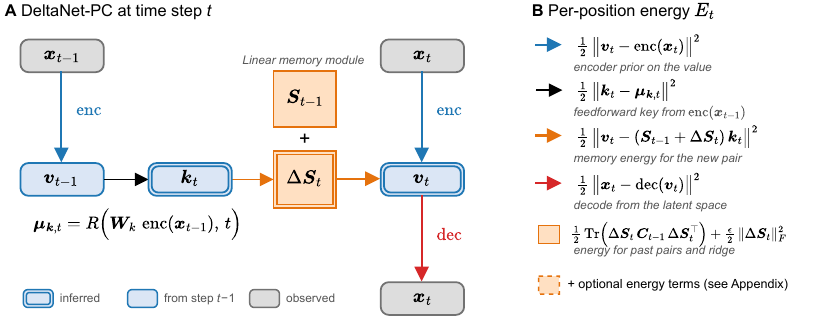}
\caption{\textbf{A:} DeltaNet-PC architecture. \textbf{B:} The per-position energy $E_t$. See Appendix~\ref{app:full_energy_fn} for the full energy function.}
\label{fig:architecture}
\end{figure*}

\subsection{Memory module}
\paragraph{Joint distribution.} We specify a standard PC-native probabilistic graphical model, which results in the memory write---recursive least squares \citep{plackett1950} when the memory update $\Delta\mS_t$ is a fully settled latent. We assume a linear memory store $\mS \in \R^{D \times D}$ and we would like this memory to encode transitions at time $t$ from keys to values such that $\vv_t \approx \mS_t \vk_t$ for a key-value pair $(\vk_t, \vv_t)$. Importantly, online updates to the memory $\mS$ should disturb what past keys retrieve as little as possible. Our joint distribution restricted to the memory model over a sequence of length $L$ is given by
\begin{gather}
    p(\vk_{1:L}, \vv_{1:L}, \mS) = p(\mS)\prod_{t=1}^{L} p(\vv_t \mid \vk_t, \mS) p(\vk_t \mid \cdot), \label{eq:joint_dist} \\
    p(\mS) \;=\; \mathcal{N}(\mS; \mathbf{0}, \mathbf{I}/\epsilon), \label{eq:memory_joint}
    \qquad
    p(\vv_t \mid \vk_t, \mS) \;=\; \mathcal{N}\bigl(\vv_t;\, \mS\vk_t,\, \mathbf{I}\bigr),
    \qquad 
    p(\vk_t \mid \cdot) \;=\; \mathcal{N}\bigl(\vk_t;\, \cdot,\, \mathbf{I}\bigr). \nonumber
\end{gather}
This allows us to make the standard predictive coding assumption that latent variables are Gaussian distributed with a mean given by the output of the previous layer.\footnote{See Section~\ref{sec:arch} for a definition of $\vk_t$ and the full joint distribution.}

\paragraph{Energy function.} Taking the negative log of Equation~\ref{eq:joint_dist} and restricting our attention to terms involving the memory $\mS$ gives the energy function for the memory module:
\begin{gather}
E(\mS) = -\log p(\vk_{1:L}, \vv_{1:L}, \mS) \;=\; \frac{\epsilon}{2}\lVert \mS\rVert^2_F + \sum_{t=1}^{L}\tfrac12\bigl\lVert \vv_t -  \mS\vk_t\bigr\rVert^2 + \text{const.}, \label{eq:memory_energy}
\end{gather}
which is equivalent to ridge regression on all $L$ pairs $(\vk_t, \vv_t)$. However, as written, this is a batched objective that requires access to all $L$ pairs---incompatible with our online learning scenario, where data points are observed one-at-a-time. Furthermore, our fixed-size memory footprint precludes storing the full dataset for offline learning. Past latents $(\vk_i, \vv_i)_{i<t}$ are committed once they have been inferred and are not revisited. We define a partial energy function that accounts for all observed data up to time $t$
\begin{gather}
    E_t(\mS) = E_{t-1}(\mS) + \tfrac12\lVert \vv_t -  \mS\vk_t\rVert^2, \label{eq:partial_energy} \\
    E_0(\mS) = \frac{\epsilon}{2}\lVert \mS\rVert^2_F, \qquad E_L(\mS) = E(\mS), \nonumber \\
    \mS_t := \argmin_{\mS} E_t(\mS), \qquad \mS_0 := \boldsymbol{0}, \qquad \Delta\mS_t := \mS_t - \mS_{t-1}, \nonumber
\end{gather}
which makes clear that at each time step, we would like to update the memory $\mS_{t-1}$ by an amount $\Delta\mS_t$ that minimizes the energy function $E_t(\mS)$.

We now express $E_t$ as a function of the latent variable $\Delta\mS_t$ with the past summarized in ($\mS_{t-1}, \mC_{t-1}$), where $\mC_{t-1} = \sum_{i < t} \vk_i \vk_i^{\top}$ is a summary of the past keys, so that we can maintain a fixed memory footprint that only involves two $D \times D$ matrices that don't grow with the sequence length.

Lemma~\ref{lem:taylor_expansion} Taylor expands $E_{t-1}$ about $\mS_{t-1}$. Let $\vr_t = \vv_t - \mS_{t-1}\vk_t$.
\begin{lemma}
\label{lem:taylor_expansion}
For any $\Delta$
\begin{equation*}
    E_t(\mS_{t-1} + \Delta) = E_{t-1}(\mS_{t-1}) + \langle\nabla E_{t-1}(\mS_{t-1}), \Delta \rangle + \tfrac12 \Tr(\Delta(\epsilon\mathbf{I} + \mC_{t-1})\Delta^{\top}) +
    \tfrac12\lVert \vr_t - \Delta\vk_t\rVert^2.
\end{equation*}
\end{lemma}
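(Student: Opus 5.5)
The plan is to split $E_t$ by its recursive definition and treat the two pieces separately. From Equation~\ref{eq:partial_energy},
\[
E_t(\mS_{t-1}+\Delta) = E_{t-1}(\mS_{t-1}+\Delta) + \tfrac12\lVert \vv_t - (\mS_{t-1}+\Delta)\vk_t\rVert^2 .
\]
The second term needs no expansion. Since $\vv_t - \mS_{t-1}\vk_t - \Delta\vk_t = \vr_t - \Delta\vk_t$ by the definition of $\vr_t$, it is exactly the last term of the claim.

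The remaining work is to show that $E_{t-1}(\mS_{t-1}+\Delta)$ equals its second-order Taylor expansion about $\mS_{t-1}$, with the stated Hessian form. Unrolling the recursion from $E_0$ gives the explicit form
\[
E_{t-1}(\mS) = \frac{\epsilon}{2}\lVert\mS\rVert_F^2 + \sum_{i<t}\tfrac12\lVert \vv_i - \mS\vk_i\rVert^2 .
\]
This is a quadratic polynomial in the entries of $\mS$, so its Taylor expansion terminates at second order and the identity is exact for every $\Delta$, not merely approximate.

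To pin down the coefficients, I expand each summand directly rather than computing Hessians abstractly. Write $\vv_i - (\mS+\Delta)\vk_i = (\vv_i - \mS\vk_i) - \Delta\vk_i$. The squared norm then splits into three parts. The constant part is the original term. The cross term is $-\langle \vv_i - \mS\vk_i, \Delta\vk_i\rangle = \langle -(\vv_i-\mS\vk_i)\vk_i^\top, \Delta\rangle_F$. The quadratic part is $\tfrac12\lVert\Delta\vk_i\rVert^2 = \tfrac12\Tr(\Delta\vk_i\vk_i^\top\Delta^\top)$. Likewise the regularizer gives
\[
\tfrac{\epsilon}{2}\lVert\mS+\Delta\rVert_F^2 = \tfrac{\epsilon}{2}\lVert\mS\rVert_F^2 + \epsilon\langle\mS,\Delta\rangle_F + \tfrac{\epsilon}{2}\Tr(\Delta\Delta^\top).
\]
Summing these at $\mS=\mS_{t-1}$, the linear terms collect into $\langle \epsilon\mS_{t-1} - \sum_{i<t}(\vv_i-\mS_{t-1}\vk_i)\vk_i^\top, \Delta\rangle_F$. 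The bracketed matrix is precisely $\nabla E_{t-1}(\mS_{t-1})$ under the Frobenius inner product, which I will note by differentiating the unrolled form. The quadratic terms collect into $\tfrac12\Tr\bigl(\Delta(\epsilon\mathbf{I} + \sum_{i<t}\vk_i\vk_i^\top)\Delta^\top\bigr) = \tfrac12\Tr(\Delta(\epsilon\mathbf{I}+\mC_{t-1})\Delta^\top)$, using the definition of $\mC_{t-1}$.

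There is no real obstacle here; the main care point is bookkeeping. The inner product must be read as Frobenius, and the trace form must be checked to match $\sum_i\lVert\Delta\vk_i\rVert^2$ together with $\lVert\Delta\rVert_F^2 = \Tr(\Delta\Delta^\top)$. It is also worth remarking, though the lemma does not need it, that $\mS_{t-1}$ minimizes $E_{t-1}$, so the gradient term actually vanishes. The lemma keeps it only for generality.
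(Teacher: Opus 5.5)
Your proposal is correct and follows essentially the same route as the paper: split $E_t$ via the recursion, observe that the new-pair term is exactly $\tfrac12\lVert \vr_t - \Delta\vk_t\rVert^2$, and use that $E_{t-1}$ is quadratic, so its second-order expansion about $\mS_{t-1}$ is exact. The only difference is that you identify the linear and quadratic coefficients by expanding each summand directly rather than by differentiating twice. This makes precise the paper's shorthand of writing the Hessian as the $D\times D$ matrix $\epsilon\mathbf{I}+\mC_{t-1}$, which enters only through the quadratic form $\Tr(\Delta\,\cdot\,\Delta^{\top})$.
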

Appendix~\ref{app:full_energy} gives the proof and the calculations that lead to the closed form
\begin{equation}
    \label{eq:energy_t}
    E_t(\mS_{t-1} + \Delta) = \underbrace{\frac12\lVert \vr_t - \Delta\vk_t\rVert^2}_{\text{new pair}} + \underbrace{\frac12\Tr(\Delta\mC_{t-1}\Delta^{\top})}_{\text{stored pairs}} + \underbrace{\frac{\epsilon}{2}\lVert \Delta \rVert_F^2}_{\text{ridge}} + \text{ const.}
\end{equation}
Recall that $\mS_t$ is defined as the minimizer of $E_t(\mS)$, which means that $\Delta\mS_t = \mS_t - \mS_{t-1}$ is the minimizer of $E_t(\mS_{t-1} + \Delta)$. Notably, we are treating $\Delta\mS_t$ as a latent variable, which means that we seek the \emph{minimizer} of $E_t(\mS_{t-1} + \Delta)$ with respect to $\Delta$, not simply a small gradient step on the energy function, which would be the case if we were treating $\mS$ as a learned weight. So we set the gradient of $E_t(\mS_{t-1} + \Delta)$ with respect to $\Delta$ to zero, which gives the update
\begin{equation}
\label{eq:delta_s_t_update}
\Delta\mS_t = \vr_t\vk_t^{\top}(\vk_t\vk_t^{\top} + \mA_{t-1})^{-1}
\end{equation}
where $\mA_{t-1} = \mC_{t-1} + \epsilon \mathbf{I}$. Derivations are provided in Appendix~\ref{app:full_energy}.

\paragraph{Online updates.}
Let us denote $\mP := \mA^{-1}$, $s_t := \vk_t^{\top}\mP_{t-1}\vk_t$, and $g_t := \frac{1}{1+s_t}$. In Appendix~\ref{app:full_energy} we show how at each time step we update $\mS_t$ and $\mP_t$ as
\begin{gather}
    \label{eq:rls_update}
    \mS_t = \mS_{t-1} + g_t \vr_t \left(\mP_{t-1} \vk_t\right)^{\top}, \qquad \mS_0 = \mathbf{0} \\
    \mP_t = \mP_{t-1} - g_t (\mP_{t-1}\vk_t)(\mP_{t-1}\vk_t)^{\top}, \qquad \mP_0 = \mathbf{I}/\epsilon.
\end{gather}
These updates are equivalent to the recursive least squares (RLS) algorithm \citep{plackett1950}. Importantly, this update rule no longer requires an inverse to be computed at every time step, which makes it feasible for online learning scenarios. It only stores $\mS_{t-1}$ and $\mP_{t-1}$, two $D \times D$ matrices that do not grow with the sequence length $L$, so it complies with our fixed-size memory footprint.

\paragraph{Interpretation.} Equations~\ref{eq:energy_t} and \ref{eq:rls_update} can be interpreted relative to common choices that have been made in the linear attention literature, in particular (i) the outer-product form found in \citep{katharopoulos2020}, (ii) the erase term found in \citep{schlag2021}, (iii) the write strength (e.g., the gate $\beta_t$ that \citet{schlag2021} must learn), and (iv) the least-squares objective over all past pairs, which the Mesa layer solves with RLS \citep{vonoswald2024}, all of which follow from our energy function. See Appendix~\ref{app:linear_attention} for a complete mathematical mapping.

\subsection{DeltaNet-PC architecture}
\label{sec:arch}
The memory module is embedded in the following architecture (Figure~\ref{fig:architecture}):
\begin{gather}
    p(\vx_{0:L}, \vv_{1:L}, \vk_{1:L}, \mS) \propto p(\mS)\prod_{t=1}^{L} p(\vx_t \mid \vv_t)\, p(\vv_t \mid \vk_t, \mS)\, p(\vv_t \mid \vx_t)\, p(\vk_t \mid \vx_{t-1}), \label{eq:full_joint} \\
    p(\vx_t \mid \vv_t) = \mathcal{N}\bigl(\vx_t;\, \dec(\vv_t),\, \mathbf{I}\bigr),
    \qquad
    p(\vv_t \mid \vx_t) = \mathcal{N}(\vv_t;\, \enc(\vx_t), \mathbf{I}),\nonumber\\
    p(\vk_t \mid \vx_{t-1}) = \mathcal{N}\bigl(\vk_t;\, R(\mW_k\,\enc(\vx_{t-1}), t),\, \mathbf{I}\bigr). \nonumber
\end{gather}
We use an encoder $\enc(\vx)$ to map observations $\vx \in \R^{d_{\text{obs}}}$ to a latent space, which is parameterized by a linear layer $\mW_{\enc} \in \R^{D \times d_{\text{obs}}}$ with no bias term. The decoder $\dec(\vv)$ (with bias) maps back to observations. We use LayerNorm at the input to the decoder as our activation function, writing $g(\cdot) := \mathrm{LayerNorm}(\cdot)$. $\dec(\vv_t) = \mW_{\dec}g(\vv_t)$ with $\mW_{\dec} \in \R^{d_{\text{obs}} \times D}$. The key map is parameterized by a linear layer $\mW_k \in \R^{D \times D}$. A rotary position code $R(\cdot, t)$ with learned per-band frequencies (rotary positional encoding (RoPE) at initialization \citep{su2024}) is used to represent the position of the key in the sequence.\footnote{Crucially, we do not hardcode any inductive bias about the task via RoPE frequencies---the model must \emph{learn} whether ordering matters for the task (e.g., single episode tasks) or not (e.g., key-value binding tasks). MQAR is the one exception, where RoPE is off for a direct comparison with the DeltaNet baseline.} We represent context in multi-episode settings with $R(\cdot, t, \vc)$, where $\vc$ is the context (see Appendix~\ref{app:rope}). In our model, queries are defined as keys $\vq_t = \vk_t$.\footnote{$\vv_t$ receives two Gaussian predictions so the joint is a product of experts and is defined up to a normalization constant that does not affect inference.}

Note that whenever we define a Gaussian-distributed variable like the ones above, there is a corresponding energy term of the form $\tfrac12\lVert \text{variable} - \text{mean}\rVert^2$ that is added to the total energy of the system. Figure~\ref{fig:architecture}B displays the per-position energy terms and we specify the complete energy function in Appendix~\ref{app:full_energy_fn}.

While we use a single linear layer for $\enc(\vx)$ and $\dec(\vv)$ in our experiments, the formulation extends straightforwardly to multi-layer encoders and decoders with interior PC latents, and other layer types (e.g., CNNs).

\subsection{Training procedure}
\label{sec:training}
We provide a complete training procedure in Algorithm~\ref{alg:training} and summarize the key steps here.
\paragraph{Initialization.} We initialize $\vk_t \gets R(\mW_k \enc(\vx_{t-1}), t)$ and $\vv_t \gets \enc(\vx_t)$ from feedforward passes.

\paragraph{Inference.} The memory write phase uses Equation~\ref{eq:rls_update} to update the memory $\mS_{t-1}$ to $\mS_t$ given the current key-value pair $(\vk_t, \vv_t)$. When the task queries the memory, $\vk_t$ and $\vv_t$ are settled with the memory held fixed so that the rest of the system's parameters in $R$, $\mW_k$, $\enc$ and $\dec$ can learn. Settling is local in time and depends only on information available at step $t$, with no unrolled computation graph across positions.

\paragraph{Learning.} After settling at each position $t$, we compute the gradient of our energy function with respect to the parameters and accumulate the gradients into a running mean. After the final position $L$, we update the parameters once per sequence.

\section{Experiments}
\label{sec:experiments}

We evaluate DeltaNet-PC on three episodic memory tasks (Figure~\ref{fig:episodic_capabilities} and Appendix~\ref{app:datasets}). Training details are in Appendix~\ref{app:training} and hyperparameters are in Appendix~\ref{app:hparams}.

\textbf{Evaluation.} Examples are drawn from a mix of token-based tasks and image-based tasks. Each task has an event stream, where events are either observations or queries. Observations fill the memory. Queries probe what the system has learned and we evaluate the system on these events. We report token-level \emph{recall accuracy}---the fraction of correctly predicted tokens, averaged over the number of queries and test sequences. Appendices~\ref{app:eval_details} and \ref{app:ablations} give evaluation details and ablations.

\subsection{Single-episode sequential memory with non-Markovian dynamics}
The task is to store a single sequence in memory and then recall it, even when the sequence has repeated elements, i.e., non-Markovian dynamics (see Figure~\ref{fig:single_ep_seq}A). Figure~\ref{fig:episodic_capabilities}A shows an example of non-Markovian dynamics in a song, where the chorus repeats and is followed by two distinct bridges. The test set for this task is completely disjoint from the training set so we are testing the model's ability to generalize to unseen sequences at test time. Note that recall is autoregressive and \emph{not} teacher-forced: each decoded token is re-encoded to form the next key, so the system must rely on its memory to recall the whole sequence correctly.

\begin{figure}[h]
    \centering
    \includegraphics{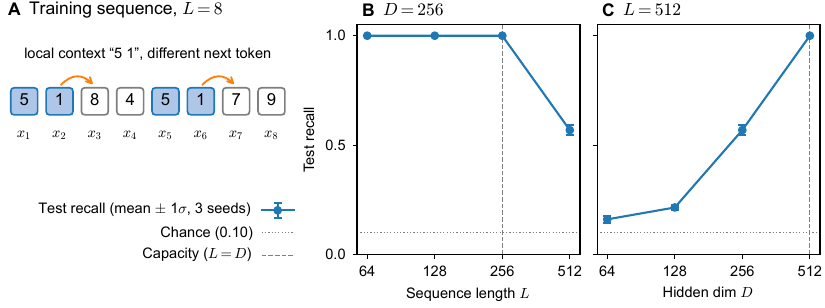}
    \caption{\textbf{A:} a sample from our single episode sequential memory task that illustrates non-Markovian dynamics. \textbf{B:} when $L\leq D$, the memory can recall sequences with perfect accuracy. \textbf{C:} scaling $D$ increases the memory's capacity allowing it to store more memories.}
    \label{fig:single_ep_seq}
\end{figure}

Figure~\ref{fig:single_ep_seq} demonstrates our system's scaling properties. The dimensionality of the memory module governs the memory capacity. In panel B, we see that when the sequence length $L$ is less than or equal to the hidden dimension $D$, the memory can recall sequences with perfect accuracy, consistent with the number of orthogonal directions in the latent space. However, when $L>D$ (e.g., $L = 512$ and $D = 256$), the memory is overloaded and we observe a graceful degradation in recall accuracy. Panel C shows that simply scaling the hidden dimension $D$ solves this problem. We tested our system up to $L=D=1{,}024$ and it maintains perfect recall accuracy as we scale (100\%, 3 seeds).

\begin{table}[ht]
\centering
\small
\caption{Test recall accuracy under various write rules of untrained and trained systems. Sequential recall at $L = D = 256$, chance $0.10$, 3 seeds, mean $\pm1\sigma$. Untrained systems cannot solve the task and even in trained systems, the write rule matters: the recursive least squares update is the only one that achieves perfect recall.}
\label{tab:write_ladder}
\resizebox{\ifdim\width>\textwidth\textwidth\else\width\fi}{!}{%
\begin{tabular}{@{}lcc@{}}
\toprule
Write rule & Untrained & Trained \\
\midrule
Outer product, $\mS \leftarrow \mS + \vv_t\vk_t^{\top}$ \citep{kohonen1972,katharopoulos2020} & 0.13 $\pm$ 0.00 & 0.66 $\pm$ 0.03 \\
Delta rule (NLMS), $\mS \leftarrow \mS + (\vv_t - \mS\vk_t)\vk_t^{\top}/\lVert\vk_t\rVert^2$ \citep{widrow1960,schlag2021} & 0.13 $\pm$ 0.00 & 0.26 $\pm$ 0.01 \\
Recursive least squares, Equation~\ref{eq:rls_update} \citep{kohonen1974,vonoswald2024} & 0.29 $\pm$ 0.01 & \textbf{1.00 $\pm$ 0.00} \\
\bottomrule
\end{tabular}}
\end{table}

We compare three popular write rules from the associative memory and linear attention literature in trained and untrained settings (Table~\ref{tab:write_ladder}), where untrained means all learned parameters are frozen at initialization. Untrained associative memory systems \citep{widrow1960,kohonen1972,kohonen1974} are insufficient for performing this task.
The write rule also has a large impact---RLS is the only write rule that achieves perfect recall (Table~\ref{tab:write_ladder}).\footnote{For completeness, we also evaluate tPC \citep{tang2023} in Appendix~\ref{app:tpc_reference}, though that system is designed for a different purpose---memorizing training sequences into slow weights, not test-time memorization.}

\subsection{Multi-episode sequential memory}
\label{sec:multiepisode}
Here the task is to store multiple context-dependent sequences in the memory at one time, where the episodes share overlapping segments. For example, consider two episodes of one data example
$$\begin{aligned}
&\text{Episode 1: } a, \textcolor{tabRed}{\boldsymbol{b}, \boldsymbol{c}}, d, e \\
&\text{Episode 2: } x, \textcolor{tabRed}{\boldsymbol{b}, \boldsymbol{c}}, f, g
\end{aligned}$$
where $\textcolor{tabRed}{\boldsymbol{b}, \boldsymbol{c}}$ is the overlapping segment with exactly the same contents at exactly the same positions. In the multi-episode setting, position and content are insufficient for recall because the next token depends on the episode context. Recall that in the multi-episode setting, keys are a function of both position $t$ and context $\vc$, $\vk_t = R(\mW_k \enc(\vx_{t-1}), t, \vc)$.

Maintaining separate but overlapping memories is commonly needed. Navigation often involves routes with overlapping segments, requiring context to correctly recall the next step of the journey (see Figure~\ref{fig:episodic_capabilities}B).

First, we evaluate our system by filling the memory with multiple episodes with overlapping segments and testing recall. Our system achieves 100\% retrieval accuracy (3 seeds) on a held-out test set---unseen sequences of multiple episodes and never-seen episode context vectors, $\vc$---with $K=8$ episodes of length $\ell = 32$ in $D = 256$ (i.e., a full memory load at $L=K\cdot \ell = 256 = D$).

Next, we probe whether the system can recover the context vector $\vc$ of an episode from observed transitions (i.e., disambiguate ``bank'' vs. ``bank'' from context clues). See Appendix~\ref{app:ctx_inference} for complete details. We create a maximally challenging test (see Figure~\ref{fig:multi_episode_examples} for a data example): the $K=8$ episodes of $\ell=32$ tokens overlap in a tree-like fashion. All $K=8$ episodes are \emph{exactly} the same in the first 8 tokens of their episodes, then 4 episodes collide in the next 8 tokens, then 2 episodes collide in the next 8 tokens, and finally all 8 episodes diverge in the last 8 tokens (i.e., $8+8+8+8=32=\ell$).

\begin{figure}[htbp]
    \centering
    \includegraphics{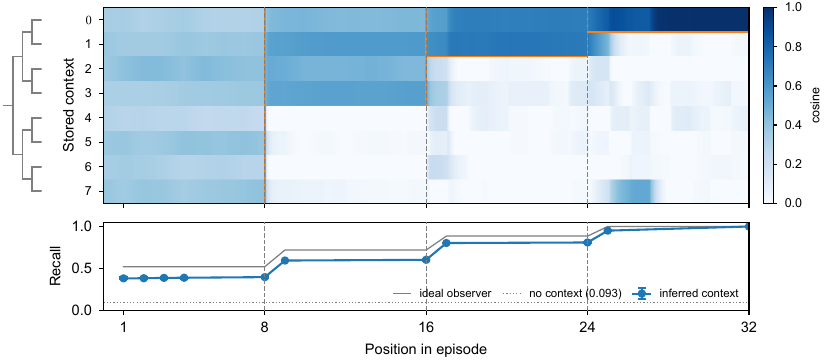}
    \caption{Contextual inference heatmap. \textbf{Top:} cosine similarity between the inferred context and the 8 possible contexts at each point in time. We observe the system resolve its uncertainty about the context as it sees more transitions. \textbf{Bottom:} given the inferred context, we measure recall accuracy for the episodes and note that it behaves nearly like the ideal observer upper bound.}
    \label{fig:context_inference_heatmap}
\end{figure}

We fill the system's memory with a data example ($K=8, \ell=32$). We then reveal observed transitions one-at-a-time and the system must infer the context $\vc$ that best explains the transitions observed so far. The inferred context stays within the span of the stored contexts, which the system tracks with a fixed-size summary (Appendix~\ref{app:ctx_inference}). Figure~\ref{fig:context_inference_heatmap} shows the results of this experiment. The heat map shows the cosine similarity between the inferred context and the 8 possible contexts at each point in time. We observe the system resolve its uncertainty about the context as it sees more transitions.\footnote{Importantly, once a transition is \emph{inconsistent} with a particular context, the cosine similarity with that context drops to near zero (see white regions in the heat map).} Success of contextual inference is measured by how well the inferred context vector $\vc$ can be used to recall the whole episode, shown in the bottom panel of Figure~\ref{fig:context_inference_heatmap}. We compare to an ideal observer which commits to one of the consistent context vectors until evidence refutes it (i.e., the upper bound), and find that our system tracks that closely. This experiment demonstrates the value of a system that can perform inference (i.e., ``settle''): we can run the memory \emph{backwards} to infer a hypothesis, and then run the memory forward to test that hypothesis. Standard feedforward systems typically need multiple layers for this kind of inference \citep{olsson2022,bietti2023,sanford2024}, which we achieve with settling ``depth''.

\subsection{Key-value memory}
\label{sec:key_value_memory}
The task is to store a sequence of keys paired with values, and retrieve the appropriate value when cued with a key. We compare to a modern DeltaNet implementation \citep{yang2024c} on the MQAR dataset \citep{arora2024}, which tests this key-value memory capability, particularly in long-range retrieval scenarios.

\begin{table}[ht]
\centering
\small
\setlength{\tabcolsep}{4.5pt}
\caption{Comparison of DeltaNet \citep{yang2024c} and DeltaNet-PC at width $D=256$ on MQAR (512-token streams, 8,192 vocabulary). $^{a}$Distinct training streams needed to achieve perfect test recall (one seed per model). $^{b}$Fast-weight floats: DeltaNet's per-head states, DeltaNet-PC's memory $\mS$ and RLS inverse $\mP$. $^{c}$Excludes token embedding parameters. Memory measurements are based on a batch of 32 on an 80 GB GPU. Test accuracy is over 3 seeds for DeltaNet-PC and 1 seed for DeltaNet.}
\begin{tabular}{@{}lcccccccccc@{}}
\toprule
 & \multicolumn{2}{c}{MQAR} & \multicolumn{2}{c}{Local in} & & & \multicolumn{3}{c}{Peak training memory at $L$} \\
\cmidrule(lr){2-3} \cmidrule(lr){4-5} \cmidrule(lr){8-10}
Model & Test acc. & Streams$^{a}$ & time & space & Params & State$^{b}$ & 512 & 65k & 131k \\
\midrule
DeltaNet    & 1.000 & 1{,}024   & \textcolor{red}{$\times$}   & \textcolor{red}{$\times$}   & 0.53\,M$^{c}$ & 65\,k  & 0.7 GB & 55 GB & OOM \\
DeltaNet-PC (ours) & 1.000 & $32$ & \textcolor{green}{\checkmark} & \textcolor{green}{\checkmark} & 0.20\,M       & 131\,k & 0.25 GB & 0.25 GB & 0.25 GB \\
\bottomrule
\end{tabular}
\label{tab:deltanet_comparison}
\end{table}

Our system achieves 100\% test set accuracy in as few as 32 training examples (Table~\ref{tab:deltanet_comparison}), whereas DeltaNet needs 1,024 (Appendix~\ref{app:eval_details} reports untrained and RoPE-on controls). 
We also note that by avoiding BPTT, our system's training memory does not grow with the sequence length $L$, whereas DeltaNet's training memory grows linearly with $L$. MQAR, with $V=8{,}192$, also shows that our system handles a large vocabulary without adding parameters.

We also evaluate our system's performance on image data, where inputs can be correlated, and the set of classes is open-ended, in contrast with a fixed token vocabulary. For this we use Omniglot characters \citep{lake2015} as keys and MNIST digits \citep{lecun1998} as values (see Appendix Figure~\ref{fig:omniglot_mnist_example} for a data example). Importantly, the Omniglot character classes in the test set are completely disjoint from those in the training set, testing generalization.

On the Omniglot-MNIST key-value memory task with 128 key-value pairs in 256 dimensions (i.e., $L=256$, a full memory load) our system achieves 99.99\% test set
accuracy (3 seeds) showing that it supports image data and can generalize to unseen character classes. We also note that our system learns to retrieve information in arbitrary order, meaning our system can learn to retrieve values by position (see single episode and multi-episode sequential memory tasks) or by content, depending on the task need. In linear attention models, addressing (i.e., keys and queries) is learned with BPTT. Our system learns addressing from local signals: training speeds up the rotary frequencies when position matters and slows them when it does not (Appendix~\ref{app:rope}).

\section{Limitations}
\label{sec:limitations}
The primary limitation of any predictive coding based system is the need to settle latents, which is typically performed with iterative optimization.\footnote{On neuromorphic analog hardware, physical dynamics (i.e., Kirchhoff's and Ohm's laws) perform settling in parallel instantaneously \citep{kendall2020}, but on digital hardware, messages must be passed between neurons.} We document our system's compute costs in Appendix~\ref{app:compute_cost}. While we have demonstrated the benefits that iterative settling can provide (Section~\ref{sec:multiepisode}), future work must continue to reduce the number of iterations required to perform inference, and amortized inference \citep{tschantz2023} provides a promising direction. We also mitigate this limitation by using the RLS update rule (Equation~\ref{eq:rls_update}) to update the memory, which is equivalent to settling $\Delta\mS_t$ under the energy function in Equation~\ref{eq:energy_t}, but is more efficient since it doesn't require iterative optimization.

\section{Discussion}
\label{sec:discussion}
In this work, we introduced DeltaNet-PC, a predictive coding system that learns to perform test-time verbatim memorization of sequential information and generalizes to content unseen in training. Verbatim memorization is a precursor capability for systems that learn \emph{what} to store, which we leave to future work. We demonstrated how our method tackles several common challenges for memory systems, including non-Markovian dynamics, overlapping episodes, and key-value retrieval. We also showed that our system can learn to infer context from observed transitions, which is enabled by iterative settling and compute ``depth''. We enforced strict requirements for local learning in space and time and bounded memory in the sequence length, which positions our system for online learning scenarios and in-memory hardware implementations. 

Treating matrix-valued updates as latent variables, as opposed to slowly learned parameters, allows us to show that predictive coding natively provides linear attention machinery. We compared our work quantitatively to other common memory rules with and without learning and a modern DeltaNet implementation. We also showed mathematically and qualitatively how our work relates to other pivotal works in the linear attention and associative memory literature.

\newpage
\subsection*{AI use statement}
In this work, we (the authors) wrote the first version of the codebase, which provided a clear scaffold for how to run experiments, how to modify the energy function, etc. We then used AI tools to assist with writing code for additional experiments and creating figures and tables. AI was also useful for identifying related works. AI played a role in brainstorming solutions to challenges encountered throughout this project. For instance, we (the authors) intuitively expected that our energy function would require a term to account for interference with past keys. We conceptualized this term as $\sum_{i=1}^{t-1} \lVert \Delta\mS_t \vk_i \rVert^2$, which would naively require storing all past keys, growing our memory footprint with the sequence length $L$. We used AI to help us brainstorm a fixed-size memory solution, which ultimately led to the recursive least squares (RLS) update rule in Equation~\ref{eq:rls_update}. Notably, we did not use AI tools to \emph{write} the main text of this paper. All em dashes, typos, etc. are the work of the authors. Some sections of the appendix were initially drafted with AI assistance but all were subsequently revised by the authors.

\subsection*{Ethics statement}
We do not foresee any ethical concerns with our work at this stage. Our work is a proof-of-concept for a predictive coding system that learns to perform test-time verbatim memorization of sequential information.

\subsection*{Reproducibility statement}
The main text describes the model, the training procedure and the experiments, and the appendices provide all remaining implementation details. Appendix~\ref{app:full_energy} provides the complete energy function, Appendix~\ref{app:datasets} describes the datasets, Appendix~\ref{app:training} provides implementation details, Appendix~\ref{app:hparams} provides hyperparameters, Appendix~\ref{app:ablations} provides ablations, and Appendix~\ref{app:compute_cost} provides compute cost. We will release our codebase upon publication.

\ificlrfinal
\subsubsection*{Acknowledgments}
We would like to thank Colin Bredenberg, Blake Richards, Charlotte Volk, and Jeffrey Seely for their helpful discussions and feedback on the manuscript. This work was supported by funds provided by the National Science Foundation and by DoD OUSD (R\&E) under Cooperative Agreement PHY-2229929 (The NSF AI Institute for Artificial and Natural Intelligence), and the US National Institutes of Health Grant S10OD028632-01.
\fi

\newpage
\bibliographystyle{iclr2027_conference}
\bibliography{bib}

\newpage
\appendix
\section{Extended Related Work}
\label{app:related}

\textbf{Predictive coding.} Predictive coding (PC) \citep{rao1999,friston2005} casts cortical learning as energy descent on layerwise prediction errors, resulting in a local Hebbian-style learning rule \citep{bogacz2017}. Recent work has improved PC at scale via scaling-aware weight parameterizations \citep{innocenti2025}, JAX-based inference \citep{innocenti2024}, and benchmarking \citep{pinchetti2025,salvatori2024}. We build on this core machinery and add a memory module, where matrix-valued updates are themselves PC latents, as opposed to slowly learned weights.

\textbf{Sequential predictive coding.} Several models extend PC to sequences. \citet{tang2023} introduce \emph{temporal predictive coding} (tPC), a recurrent PC model in which a hidden state predicts the next observation. \citet{millidge2024} link PC to Kalman filtering for temporal prediction. These systems treat the recurrent weight as a slowly learned parameter, so they cannot learn new key-value information at test time. Once training ends the model can recall only what it was trained on. Our model adds a fast-weight memory whose update emerges from the energy function used for inference.

\textbf{Predictive coding for associative memory.} A complementary line of work connects PC to associative memory in the static setting: covariance-based hetero-associative memory \citep{tang2023a}, auto-associative variants \citep{salvatori2021}, the continually-learnable BayesPCN \citep{yoo2022}, and a unification of PC with modern Hopfield networks \citep{millidge2022c,ramsauer2021,krotov2016,chaudhry2023} formalized as kernel memory networks \citep{iatropoulos2022}. These systems operate on static patterns. We instead tie a hetero-associative memory to a sequential model, with each step binding a key (the previous latent) to a value (the current latent) inside a fixed-size matrix.

\textbf{Linear attention and test-time learning.} Linear-attention transformers re-express attention as a recurrent state with a fixed-size matrix-valued memory \citep{katharopoulos2020,choromanski2021,peng2021,sun2023,yang2024}. \citet{schlag2021} establish that linear transformers are equivalent to fast-weight programmers, with the canonical update being the DeltaNet / Widrow-Hoff rule \citep{widrow1960}. Earlier fast-weight programmer work includes \citet{hinton1987,ba2016,irie2021}. In these works the update rule is a design decision: \citet{schlag2021} weigh the delta rule against purely additive and gated alternatives, select it for how well it manages memory, and learn a per-step gate $\beta_t$ to set the write strength. The surrounding architecture is then trained by backpropagation through time, as is the case with more modern DeltaNet variants \citep{yang2024c}. More recent work solves the regularized least-squares problem exactly at every step, with recursive least squares (RLS) \citep{vonoswald2024} or with conjugate gradients \citep{vonoswald2026}. In our system many of the design decisions of prior works emerge naturally from our energy function: the outer-product form, the erase term, the memory write strength, and the use of RLS. Importantly, our memory model is embedded within a system that is trained without backpropagation through time (BPTT).

\textbf{Biologically plausible learning beyond BPTT.} Our work draws inspiration from the enviable properties of neuroscience and biology, which learn without backpropagation through time \citep{lillicrap2020}, support online updates, operate within a fixed memory footprint, and have remarkable energy efficiency. Our goal is to build a system with these properties, not necessarily pursue bio-realism for its own sake. Prior works that have explored biologically plausible learning include target propagation \citep{lee2015,aceituno2025}, equilibrium propagation \citep{scellier2017,scurria2026a}, generalized recirculation \citep{oreilly1996}, nonlinear backpropagation \citep{hertz1997}, synthetic gradients \citep{jaderberg2017}, and predictive E-prop \citep{noe2026}.

\textbf{Complementary learning systems.} Our split between fast online memory ($\mS$) and slow synaptic weights mirrors the hippocampus/neocortex split of \citet{mcclelland1995,kumaran2016}.

\section{Energy Function Derivations}
\label{app:full_energy}
\subsection{Partial energy function}
We now prove Lemma~\ref{lem:taylor_expansion} stated in the main text.
\addtocounter{lemma}{-1}
\begin{lemma}
For any $\Delta$
\begin{equation*}
    E_t(\mS_{t-1} + \Delta) = E_{t-1}(\mS_{t-1}) + \langle\nabla E_{t-1}(\mS_{t-1}), \Delta \rangle + \tfrac12 \Tr(\Delta(\epsilon\mathbf{I} + \mC_{t-1})\Delta^{\top}) +
    \tfrac12\lVert \vr_t - \Delta\vk_t\rVert^2.
\end{equation*}
\end{lemma}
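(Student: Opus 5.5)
The approach is to use the recursion $E_t(\mS) = E_{t-1}(\mS) + \tfrac12\lVert \vv_t - \mS\vk_t\rVert^2$ from Equation~\ref{eq:partial_energy}. We then handle the two summands separately at $\mS = \mS_{t-1} + \Delta$.

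The new-pair term is immediate. We have $\vv_t - (\mS_{t-1}+\Delta)\vk_t = (\vv_t - \mS_{t-1}\vk_t) - \Delta\vk_t = \vr_t - \Delta\vk_t$, which yields the last term $\tfrac12\lVert \vr_t - \Delta\vk_t\rVert^2$ exactly.

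For $E_{t-1}$, the key observation is that it is an exact quadratic in $\mS$. Unrolling the recursion gives
\[
E_{t-1}(\mS) = \frac{\epsilon}{2}\lVert \mS\rVert_F^2 + \sum_{i<t}\tfrac12\lVert \vv_i - \mS\vk_i\rVert^2,
\]
so its second-order Taylor expansion about $\mS_{t-1}$ is exact, with no remainder. I would compute it directly rather than appeal to Taylor's theorem. Expanding $\tfrac{\epsilon}{2}\lVert \mS_{t-1}+\Delta\rVert_F^2$ gives $\tfrac{\epsilon}{2}\lVert\mS_{t-1}\rVert_F^2 + \epsilon\langle \mS_{t-1},\Delta\rangle + \tfrac{\epsilon}{2}\Tr(\Delta\Delta^\top)$. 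Expanding each $\tfrac12\lVert (\vv_i - \mS_{t-1}\vk_i) - \Delta\vk_i\rVert^2$ gives the old value, then the cross term $-\langle (\vv_i - \mS_{t-1}\vk_i)\vk_i^\top, \Delta\rangle$, then $\tfrac12\lVert\Delta\vk_i\rVert^2$. Here $\langle\cdot,\cdot\rangle$ is the Frobenius inner product, and we use $\vx^\top\Delta\vk = \langle \vx\vk^\top,\Delta\rangle$.

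Summing over $i<t$, the zeroth-order terms reassemble into $E_{t-1}(\mS_{t-1})$. The first-order terms reassemble into $\langle \epsilon\mS_{t-1} - \sum_{i<t}(\vv_i - \mS_{t-1}\vk_i)\vk_i^\top, \Delta\rangle$, and the matrix on the left is exactly $\nabla E_{t-1}(\mS_{t-1})$ (differentiate the unrolled form). For the second-order terms, use $\lVert\Delta\vk_i\rVert^2 = \Tr(\Delta\vk_i\vk_i^\top\Delta^\top)$ and linearity of the trace. Together with the ridge piece this gives $\tfrac12\Tr(\Delta(\epsilon\mI + \mC_{t-1})\Delta^\top)$, with $\mC_{t-1} = \sum_{i<t}\vk_i\vk_i^\top$. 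Adding the new-pair term completes the identity.

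No step is a real obstacle; the only care needed is bookkeeping. One point is the sign convention in the gradient. The other is noting that the statement holds for arbitrary $\mS_{t-1}$, not only the minimizer. Minimality of $\mS_{t-1}$ (so that $\nabla E_{t-1}(\mS_{t-1}) = 0$) is not needed here. It is used only afterwards to pass to Equation~\ref{eq:energy_t}, where the linear term vanishes.
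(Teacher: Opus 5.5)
Your proposal is correct and follows the paper's proof. You split off the new-pair term via the recursion for $E_t$, using $\vv_t - (\mS_{t-1}+\Delta)\vk_t = \vr_t - \Delta\vk_t$, and then expand the quadratic $E_{t-1}$ exactly about $\mS_{t-1}$, with linear coefficient $\epsilon\mS_{t-1} - \sum_{i<t}(\vv_i - \mS_{t-1}\vk_i)\vk_i^{\top}$ and curvature $\epsilon\mI + \mC_{t-1}$. The only difference is cosmetic: you multiply out the squares directly instead of invoking an exact second-order Taylor expansion with the Hessian written as the $D\times D$ matrix $\epsilon\mI + \mC_{t-1}$, and your remark that minimality of $\mS_{t-1}$ is needed only afterwards matches the paper.
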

\begin{proof}

Recall that $E_t(\mS_{t-1} + \Delta) = E_{t-1}(\mS_{t-1} + \Delta) + \tfrac12\lVert \vv_t - (\mS_{t-1} + \Delta)\vk_t\rVert^2$.

A second-order Taylor expansion of $E_{t-1}(\mS_{t-1} + \Delta)$  about $\mS_{t-1}$, which is exact because $E_{t-1}$ is quadratic, gives
\begin{align*}
E_{t-1}(\mS_{t-1} + \Delta) = E_{t-1}(\mS_{t-1}) + \langle\nabla E_{t-1}(\mS_{t-1}), \Delta\rangle + \frac12\Tr(\Delta \mH_{t-1}\Delta^{\top}),
\end{align*}
where $\mH_{t-1} = \nabla^2 E_{t-1}$ is the Hessian of $E_{t-1}$, which does not depend on the expansion point.

Differentiating $E_{t-1}(\mS)$ once gives
\begin{align*}
\nabla E_{t-1}(\mS) &= \nabla\left( \frac{\epsilon}{2}\lVert \mS \rVert_F^2 + \sum_{i<t} \tfrac12 \lVert \vv_i - \mS \vk_i \rVert^2 \right) \\
&= \epsilon \mS - \sum_{i<t} (\vv_i - \mS \vk_i)\vk_i^{\top}.
\end{align*}

Differentiating $E_{t-1}(\mS)$ a second time gives
\begin{align*}
\nabla^2 E_{t-1}(\mS) &= \nabla\left( \epsilon \mS - \sum_{i<t} (\vv_i - \mS \vk_i)\vk_i^{\top} \right) \\
&= \epsilon \mathbf{I} + \sum_{i<t} \vk_i \vk_i^{\top}\\
&= \epsilon \mathbf{I} + \mC_{t-1}.
\end{align*}
This completes the proof for the first three terms of the lemma.

The last term follows from our definition of $\vr_t = \vv_t - \mS_{t-1}\vk_t$, which means that $\vv_t - (\mS_{t-1} + \Delta)\vk_t = \vr_t - \Delta\vk_t$.
\end{proof}

Since $\mS_{t-1}$ is a ``fixed'' quantity with respect to $E_t$, the first term in Lemma~\ref{lem:taylor_expansion} will be a constant when optimizing $E_t$. Furthermore, $\mS_{t-1}$ minimized $E_{t-1}(\mS)$ (and this holds inductively down to the base case of $\mS_0 = 0 = \argmin_{\mS} E_0(\mS)$), which means that $\nabla E_{t-1}(\mS_{t-1}) = 0$. This allows us to drop the second term in Lemma~\ref{lem:taylor_expansion}. We now have a closed-form energy function for $E_t$:
\begin{equation}
    E_t(\mS_{t-1} + \Delta) = \underbrace{\frac12\lVert \vr_t - \Delta\vk_t\rVert^2}_{\text{new pair}} + \underbrace{\frac12\Tr(\Delta\mC_{t-1}\Delta^{\top})}_{\text{stored pairs}} + \underbrace{\frac{\epsilon}{2}\lVert \Delta \rVert_F^2}_{\text{ridge}} + \text{ const.}
\end{equation}

\paragraph{Algebra for $\Delta\mS_t$ update.} Below we show algebra for Equation~\ref{eq:delta_s_t_update}. We take the gradient of $E_t(\mS_{t-1} + \Delta)$ with respect to $\Delta$ and set it to zero to find the stationary point:
\begin{align}
    \nabla_{\Delta} E_t(\mS_{t-1} + \Delta) &= -(\vr_t - \Delta\vk_t)\vk_t^{\top} + \Delta\mC_{t-1} + \epsilon \Delta = 0, \nonumber \\
    \Delta(\vk_t\vk_t^{\top} + \mC_{t-1} + \epsilon \mathbf{I}) &= \vr_t\vk_t^{\top}, \nonumber \\
    \Delta &= \vr_t\vk_t^{\top}(\vk_t\vk_t^{\top} + \mC_{t-1} + \epsilon \mathbf{I})^{-1}, \label{eq:delta_s_t}
\end{align}
where we have used the fact that $\vk_t\vk_t^{\top} + \mC_{t-1} + \epsilon \mathbf{I}$ is a positive definite matrix and therefore invertible, which also means that due to $E_t$'s convexity in $\Delta$, the stationary point is the global minimum. Let us define $\mA_{t-1} = \mC_{t-1} + \epsilon \mathbf{I}$ and we now have the following update rule for the latent variable $\Delta\mS_t$:
\begin{equation}
\Delta\mS_t = \vr_t\vk_t^{\top}(\vk_t\vk_t^{\top} + \mA_{t-1})^{-1}
\end{equation}

\paragraph{Online updates.} Recall that $\mP_{t-1} = \mA_{t-1}^{-1}$. The Sherman-Morrison formula states that for an invertible matrix $\mA$ with inverse $\mP$ we have
\begin{equation}
    (\mA + \vk\vk^{\top})^{-1} = \mP - \frac{\mP\vk\vk^{\top}\mP}{1 + \vk^{\top}\mP\vk}.
\end{equation}

Let $s_t := \vk_t^{\top}\mP_{t-1}\vk_t$ and $g_t := \frac{1}{1+s_t}$. Plugging the Sherman-Morrison formula into Equation~\ref{eq:delta_s_t_update} gives the following update rule for $\Delta\mS_t$:
\begin{align*}
\Delta\mS_t &= \vr_t\vk_t^{\top}\left(\mP_{t-1} - \frac{\mP_{t-1}\vk_t\vk_t^{\top}\mP_{t-1}}{1 + \vk_t^{\top}\mP_{t-1}\vk_t}\right) \\
&= \vr_t\left(\vk_t^{\top}\mP_{t-1} - \frac{\vk_t^{\top}\mP_{t-1}\vk_t\vk_t^{\top}\mP_{t-1}}{1 + \vk_t^{\top}\mP_{t-1}\vk_t}\right) \\
&= \vr_t\left(\vk_t^{\top}\mP_{t-1} - \frac{s_t\vk_t^{\top}\mP_{t-1}}{1 + s_t}\right) \\
&= \vr_t\left(\vk_t^{\top}\mP_{t-1}(1 - \frac{s_t}{1 + s_t})\right) \\
&= \vr_t\left(\frac{\vk_t^{\top}\mP_{t-1}}{1 + s_t}\right) \\
&= g_t \vr_t \left(\mP_{t-1} \vk_t\right)^{\top},
\end{align*}
where the last line relies on the symmetry of $\mP_{t-1}$. This \emph{is} the recursive least squares (RLS) update rule \citep{plackett1950}.

Importantly, this update rule no longer requires an inverse to be computed at every time step, which makes it feasible for online learning scenarios. It only requires the storage of $\mP_{t-1}$, which is a $D \times D$ matrix, and complies with our fixed-size memory footprint constraint. Furthermore, recall that $\mS_t$ is also a $D \times D$ matrix, so the total memory footprint of our memory module is $2D^2$, which is fixed and does not grow with the sequence length $L$. At each timestep, we update $\mS_t$ and $\mP_t$ as
\begin{gather}
    \mS_t = \mS_{t-1} + g_t \vr_t \left(\mP_{t-1} \vk_t\right)^{\top}, \qquad \mS_0 = \mathbf{0} \\
    \mP_t = \mP_{t-1} - g_t (\mP_{t-1}\vk_t)(\mP_{t-1}\vk_t)^{\top}, \qquad \mP_0 = \mathbf{I}/\epsilon.
\end{gather}

\subsection{Relating the energy function to linear attention}
\label{app:linear_attention}
Equations~\ref{eq:energy_t} and \ref{eq:rls_update} can be interpreted relative to common choices that have been made in the linear attention literature, in particular the (i) outer-product form found in \citep{katharopoulos2020}, (ii) the erase term found in \citep{schlag2021}, (iii) the write strength (e.g., the gate $\beta_t$ that \citet{schlag2021} must learn), and (iv) the least-squares objective over all past pairs, which the Mesa layer solves with RLS \citep{vonoswald2024}, all of which follow from our energy function.

\paragraph{(i) Outer-product form.} If we write $\vw_t := g_t\,\mP_{t-1}\vk_t$, then Equation~\ref{eq:rls_update} becomes
\begin{equation}
    \mS_t = \mS_{t-1} + \vr_t \vw_t^{\top}, \qquad \mS_0 = \mathbf{0}
\end{equation}
consistent with the outer-product form chosen for linear attention \citep{katharopoulos2020}.

\paragraph{(ii) Erase term.} Moving from vanilla linear attention $\mS_t = \mS_{t-1} + \vv_t\vk_t^{\top}$ to DeltaNet, \citet{schlag2021} introduced an erase term to reduce interference, given by $\mS_t = \mS_{t-1} + \beta_t\,\vv_t\vk_t^{\top} - \beta_t\,\mS_{t-1}\vk_t\vk_t^{\top}$. Expanding $\vr_t$ in Equation~\ref{eq:rls_update} and writing $\vw_t := \mP_t\vk_t$ gives
\begin{equation}
    \mS_t = \mS_{t-1} + \underbrace{\vv_t\vw_t^{\top}}_{\text{write}} - \underbrace{\mS_{t-1}\vk_t\vw_t^{\top}}_{\text{erase}}
\end{equation}
where our system has the same write and same erase, with $\vw_t$ in place of $\beta_t\vk_t$, where $\vw_t$ is set by the energy.

\paragraph{(iii) Write strength.} DeltaNet's write strength gate $\beta_t$ is a learned scalar applied to every direction. Ours is a matrix $\mP_t = (\epsilon\mathbf{I} + \mC_t)^{-1}$. It results from an interaction with past keys (recall $\mC_t = \sum_{i=1}^{t} \vk_i \vk_i^{\top}$) and the current key $\vk_t$. Where DeltaNet has $\beta_t \vk_t$, we have $\mP_t \vk_t$. The Mesa layer of \citet{vonoswald2024} uses the same matrix, with a learned ridge in place of our fixed $\epsilon$.

\paragraph{(iv) Memory interference term.} The term $\frac12\Tr(\Delta\mC_{t-1}\Delta^{\top})$ from our energy Equation~\ref{eq:energy_t} can be rewritten as: $\frac12\sum_{i<t} \lVert \Delta\vk_i\rVert^2$. It has an elegant interpretation: penalize any write $\Delta$ in proportion to how much it changes the retrieval on any past key $\vk_i$. Any nonzero $\Delta\vk_i$ contributes to interference so the energy function is actively trying to minimize it. \citet{vonoswald2024} define the Mesa layer as the exact minimizer of the same ridge-regression objective over all past pairs (Equation~\ref{eq:memory_energy}) and compute it with RLS. Their motivation is in-context learning: the layer computes the least-squares solution in one step that trained Transformers approximate across many layers. The interference term is implicit in their objective. Our derivation makes it explicit and arrives at it from the PC energy.

\subsection{Value-orthogonality regularizer}
\label{app:value_ortho}
In order to further improve key geometry, we can also engage an optional regularizer that encourages values $\vv_t$ to be orthogonal to prior values $\vv_i$ for $i < t$. Recall that keys are readouts of values---$\vv_t \approx \enc(\vx_t)$ and $\vk_{t+1} \approx R(\mW_k \vv_t, t+1)$---so keys inherit any improved geometry from the values. We remove magnitude information so that we only encourage angular separation of values. Let $\hat{\vv} = \vv / \lVert \vv \rVert$ be a normalized value vector, $\mC_{t-1}^v = \sum_{i < t}\hat{\vv}_i\hat{\vv}_i^{\!\top}$ be the accumulated outer-product of prior normalized values, and $\tilde{\mC}_{t-1}^v = \mC_{t-1}^v / \max(\Tr(\mC_{t-1}^v), 1)$ be the normalized version of $\mC_{t-1}^v$ that is invariant to the number of prior values. The vanilla form of the regularizer is
\begin{equation}
\mathcal{R}^{\vv}_t \;=\; \tfrac{1}{2} \hat{\vv}_t^{\top} \tilde{\mC}_{t-1}^v \hat{\vv}_t
\;=\; \frac{1}{2\max(t-1,\,1)}\sum_{i < t}(\hat{\vv}_t^{\!\top}\hat{\vv}_i)^2.
\label{eq:reg_v}
\end{equation}
Note that in some cases, one may observe the same value multiple times in a sequence, and in that case, we may not want to try to orthogonalize the value with respect to itself. In that case, we can use a modified version of the regularizer that ignores the current value $\vv_t$ in the accumulated outer-product. Let $\hat{\ve}_t = \enc(\vx_t) / \lVert \enc(\vx_t) \rVert$ (stop-gradiented) and $m = \hat{\ve}_t^{\top}\tilde{\mC}_{t-1}^v\hat{\ve}_t$ be the magnitude of the current value in the accumulated outer-product. The modified regularizer is
\begin{gather}
\mathcal{R}^{\vv}_t \;=\; \tfrac{1}{2} \hat{\vv}_t^{\top} \tilde{\mC}_{t-1}^v \hat{\vv}_t - \tfrac{1}{2} m (\hat{\ve}_t^{\top}\hat{\vv}_t)^2 \;=\; \tfrac{1}{2}\hat{\vv}_t^{\top} \mM_t \hat{\vv}_t,\\
\mM_t \;=\; \tilde{\mC}_{t-1}^v - m \hat{\ve}_t\hat{\ve}_t^{\!\top}.
\label{eq:reg_v_mod}
\end{gather}
Notably, the memory footprint of this regularizer is $O(D^2)$, which is independent of the sequence length $L$.

\subsection{Full energy function}
\label{app:full_energy_fn}

Every factor of Eq.~\ref{eq:full_joint} contributes one quadratic term of the form $\tfrac12\lVert \text{variable} - \text{mean} \rVert^2$, the stored pairs term comes from Lemma~\ref{lem:taylor_expansion}, and the value-orthogonality regularizer is detailed in \S\ref{app:value_ortho}. The energy at position $t$ is a function of three latent variables $(\vv_t, \vk_t, \Delta\mS_t)$ and is given by
\begin{equation}
\begin{aligned}
E_t(\vv_t, \vk_t, \Delta\mS_t)
\;=\;&
\underbrace{\tfrac12\lVert \vx_t - \dec(\vv_t)\rVert^2}_{\text{decoder}}
\;+\;
\underbrace{\tfrac{\lambda_{\text{ev}}}{2}\lVert \vv_t - \enc(\vx_t)\rVert^2}_{\text{encoder}}
\;+\;
\underbrace{\tfrac12\lVert \vv_t - (\mS_{t-1} + \Delta\mS_t)\,\vk_t\rVert^2}_{\text{memory energy (new pair)}} \\
&+\;
\underbrace{\tfrac{\lambda_k}{2}\lVert \vk_t - R(\mW_k\,\enc(\vx_{t-1}), t)\rVert^2}_{\text{address prior}}
\;+\;
\underbrace{\tfrac12\Tr(\Delta\mS_t\,\mC_{t-1}\,\Delta\mS_t^{\top})}_{\text{stored pairs}}\\
&\;+\;
\underbrace{\tfrac{\epsilon}{2}\lVert \Delta\mS_t\rVert_F^2}_{\text{ridge}}
\;+\;
\lambda_v\,\mathcal{R}^{\vv}_t(\vv_t)
\;+\; \text{const},
\end{aligned}
\label{eq:full_energy}
\end{equation}
with $\vx_{t-1}$ and $\vx_t$ observed, and $\mS_{t-1}$, $\mC_{t-1}$ and $\mC^v_{t-1}$ the memory as of the previous step, $t-1$. Figure~\ref{fig:architecture} visualizes Eq.~\ref{eq:full_energy}: one Gaussian term per arrow, plus the memory terms. A deeper encoder or decoder would add one term of the form $\tfrac12\lVert \vz_{\ell} - \mW_\ell\, g(\vz_{\ell-1})\rVert^2$ per layer (Eq.~\ref{eq:pc_energy}).

\paragraph{Write and learn passes.} The training procedure (Algorithm~\ref{alg:training}) minimizes Eq.~\ref{eq:full_energy} over different variables depending on the type of event observed in the stream: an observation $(\vx_{t-1}, \vx_t, t)$ (the write) or a query $(\vx^{\text{in}}, \vy, p)$ (the learn pass).

\emph{The write} (an observation $(\vx_{t-1}, \vx_t, t)$) minimizes the energy function over $\Delta\mS_t$ alone, with the value and the address at their prior means, $\vv_t = \enc(\vx_t)$ and $\vk_t = R(\mW_k\,\enc(\vx_{t-1}), t)$. Only the three $\Delta\mS_t$ terms depend on $\Delta\mS_t$, so the minimizer is the closed form of Eq.~\ref{eq:delta_s_t_update}, computed recursively as Eq.~\ref{eq:rls_update}. It commits updates to $\mS_t$, $\mP_t$, and $\tilde{\mC}^v_t$.

\emph{The learn pass} (a query $(\vx^{\text{in}}, \vy, p)$, Table~\ref{tab:query_schedules}) minimizes the same energy over the value and the address key, $(\vv, \vk)$, with the memory held fixed and the observation $\vy$ clamped at the output. In particular, that is the energy function restricted to the following:
\begin{equation}
\begin{aligned}
E^{\text{learn}}_t(\vv, \vk;\, \vx^{\text{in}}, \vy, p)
\;=\;&
\tfrac12\lVert \vy - \dec(\vv)\rVert^2
\;+\;
\tfrac{\lambda_{\text{ev}}}{2}\lVert \vv - \enc(\vy)\rVert^2
\;+\;
\tfrac12\lVert \vv - \mS_{t}\vk\rVert^2 \\
&+\;
\tfrac{\lambda_k}{2}\lVert \vk - R(\mW_k\,\enc(\vx^{\text{in}}), p)\rVert^2
\;+\;
\lambda_v\,\mathcal{R}^{\vv}_t(\vv).
\end{aligned}
\label{eq:learn_energy}
\end{equation}
The pair is optimized for $T$ gradient steps, initialized from the same prior means the write used, and the parameter step is the gradient of Eq.~\ref{eq:learn_energy} at the settled point with respect to the parameters: $\dec$, $\enc$, $\mW_k$ and the rotary frequencies from the address prior.

\paragraph{Recall.} At test time, the default is to read out $\mS_t\vk_t$ at the feedforward address and decode to the observation space. The decoded token is re-encoded to form the next address. Nothing is written (Algorithm~\ref{alg:training}). Optionally, a write-time settle can be run to refine the value and memory update (with the key held fixed at the feedforward address).

\subsection{Algorithmic description of DeltaNet-PC}
\label{app:algorithm}
Algorithm~\ref{alg:training} gives the full training step described in \S\ref{sec:training}.

\begin{algorithm}[h]
\DontPrintSemicolon
\SetKwComment{Comment}{$\triangleright$\ }{}
\caption{DeltaNet-PC on one training example / event stream containing observations and queries (Table~\ref{tab:query_schedules}).}
\label{alg:training}
\KwIn{Event stream $e_1, \dots, e_N$, each event an observation $(\vx_{t-1}, \vx_t, t)$ or a query $(\vx^{\text{in}}, \vy, p)$: a cue, a target, and the position; parameters $\boldsymbol{\theta} = (\boldsymbol{\theta}_{\enc}, \boldsymbol{\theta}_{\dec}, \mW_k, \boldsymbol{\theta}_R)$.}
$\mS \gets \mathbf{0}$;\quad $\mP \gets \mI/\epsilon$;\quad $\mC^v \gets \mathbf{0}$;\quad $\bar{\vg} \gets \mathbf{0}$;\quad $n \gets 0$\;
\ForEach{event $e$ in the stream}{
  \uIf{$e$ is an observation ${(\vx_{t-1}, \vx_t, t)}$}{
    \Comment{memory update: minimizer of $E_t$ (Eq.~\ref{eq:energy_t})}
    $\vk \gets R(\mW_k\,\enc(\vx_{t-1}),\, t)$;\quad $\vv \gets \enc(\vx_t)$\;
    $\vr \gets \vv - \mS\vk$;\quad $\vw \gets \mP\vk$;\quad $g \gets 1 / (1 + \vk^{\top}\vw)$\;
    $\mS \gets \mS + g\,\vr\,\vw^{\top}$;\quad $\mP \gets \mP - g\,\vw\vw^{\top}$;\quad $\mC^v \gets \mC^v + \hat{\vv}\hat{\vv}^{\top}$\;
  }
  \Else(query ${(\vx^{\text{in}}, \vy, p)}$){
    \Comment{settle key and value with $\boldsymbol{\theta}$ and $\mS$ frozen}
    $\tilde{\vk} \gets R(\mW_k\,\enc(\vx^{\text{in}}),\, p)$;\quad $\tilde{\vv} \gets \enc(\vy)$\;
    \For{$\tau = 1, \dots, T$}{
      $(\tilde{\vk}, \tilde{\vv}) \gets (\tilde{\vk}, \tilde{\vv}) - \eta_x\,\nabla_{(\tilde{\vk}, \tilde{\vv})}\,E^{\text{learn}}(\tilde{\vk}, \tilde{\vv};\, \vx^{\text{in}}, \vy, p)$
    }
    $\bar{\vg} \gets \bar{\vg} + \nabla_{\boldsymbol{\theta}}\,E^{\text{learn}}(\tilde{\vk}, \tilde{\vv};\, \vx^{\text{in}}, \vy, p)$;\quad $n \gets n + 1$ \Comment*[r]{Parameter update}
  }
}
$\boldsymbol{\theta} \gets \boldsymbol{\theta} - \eta_\theta\,\tfrac{256}{n}\,\bar{\vg}$ \Comment*[r]{one step per episode: scaled by $\times\, 256$}
\Return $\boldsymbol{\theta}$\;
\end{algorithm}

\begin{table}[h]
\centering
\footnotesize
\setlength{\tabcolsep}{4pt}
\caption{The event streams for the three tasks.}
\begin{tabular}{@{}p{2.1cm}p{6.0cm}p{5.2cm}@{}}
\toprule
Task & Event stream & Test scoring \\
\midrule
Sequential recall &
Observations and queries alternate: after each observation $(\vx_{t-1}, \vx_t, t)$ the learner poses its own query $(\vx_{t-1}, \vx_t, t)$, to retrieve what was just written at its own position, against the memory as it stands. $L$ observations and $L$ queries per stream; online learning. &
{\raggedright Ground truth $\vx_t$ compared to prediction $\hat{\vx}_t$ decoded from $\mS_L R(\mW_k\,\enc(\hat{\vx}_{t-1}), t)$ for $t = 1, \dots, L$, the final memory frozen.\par} \\
\addlinespace
Official MQAR &
The $128$ context observations (64 pairs), then the query section: at each labelled position $t$ the task's query $(\vx_{t-1}, \vy_t, t)$, the query key and its value, precedes the observation that writes the filler $\vx_t$. Every position is written. &
{\raggedright Ground truth $\vy_t$ compared to prediction $\hat{\vy}_t$ decoded from $\mS_{t-1}R(\mW_k\,\enc(\vx_{t-1}), t)$ for each $t$ containing a query.\par} \\
\addlinespace
Omniglot binding &
Stream of $L$ observations, then the task's queries: all queries have the form $(\vx_{t-1}, \vx_t, L)$ meaning they are queries at the final position $L$. &
{\raggedright Ground truth $\vx_t$ compared to prediction $\hat{\vx}_t$ decoded from $\mS_L\,R(\mW_k\,\enc(\vx_{t-1}), L)$.\par} \\
\bottomrule
\end{tabular}
\label{tab:query_schedules}
\end{table}

\section{Datasets}
\label{app:datasets}
Here we describe each of our datasets in more detail. Table~\ref{tab:datasets} summarizes our four datasets. Each dataset provides a stream of observations $\vx_t \in \R^{d_{\text{obs}}}$, where tokens are represented as $d_{\text{obs}} = 256$-dimensional vectors drawn from a Gaussian codebook that is unit-normalized, and images are represented as $28 \times 28$ pixel arrays flattened to $d_{\text{obs}} = 784$. All tasks use disjoint training, validation, and test splits.

\begin{table}[h]
\centering
\footnotesize
\setlength{\tabcolsep}{4pt}
\caption{Summary of datasets.}
\begin{tabular}{@{}p{2.1cm}p{1.7cm}cp{2.2cm}p{1.7cm}p{1.9cm}p{2.0cm}@{}}
\toprule
Task & Observation & $d_{\text{obs}}$ & $L$, $D$ & $V$ & Train / Val. / Test & Scored units per test set \\
\midrule
Single-episode sequences & token & 256 & $L = 64$ to $512$ at $D = 256$, $L = 512$ at $D = 64$ to $512$, $L = D = 1{,}024$ & 10 tokens & $1{,}024$ / $128$ / $128$ & $128 L$ relay tokens \\
\addlinespace
Multi-episode sequences & token & 256 & $256$ ($K=8 \times \ell=32$), $D = 256$ & 10 tokens & $1{,}024$ / $128$ / $128$ & $32{,}768$ relay tokens \\
\addlinespace
MQAR \citep{arora2024} & token & 256 & $512$, $D = 256$ & $8{,}192$ tokens ($4{,}095$ keys, $4{,}096$ values) & $100{,}000$ (or $32$) / $128$ / $3{,}000$ & $192{,}000$ labelled positions \\
\addlinespace
Omniglot-MNIST binding & image $28 \times 28$ & 784 & $256$ ($128$ pairs), $D = 256$ & 10 digits & $1{,}024$ / $128$ / $128$ & $16{,}384$ probes \\
\bottomrule
\end{tabular}
\label{tab:datasets}
\end{table}

\subsection{Single-episode sequential memory}
\label{app:data_seq}

\paragraph{Generator.} A sequence of $L$ tokens is drawn uniformly from a vocabulary of size $V$. One non-Markovian transition is inserted into each sequence, so that a given token is followed by two different successors---the chorus with two bridges of the song analogy in the main text.

\paragraph{Observations.} Each token is represented as its codebook row in $\R^{256}$, shared by all splits and all seeds. Decoding is performed by finding the minimum distance codebook entry by Euclidean distance.

\paragraph{Sizes.} Each split is composed of independently generated sequences: $1{,}024$ for training and $128$ each for validation and test. The lengths used are $L \in \{64, 128, 256, 512\}$ at $D = 256$, $L = 512$ at $D \in \{64, 128, 256, 512\}$, and $L = D = 1{,}024$, three seeds each. 
$n = 0$  is the untrained control (every learning rate zero), three seeds each.

\paragraph{Metric.} Recall accuracy is the fraction of the $L$ ground truth tokens that are recalled after the memory has been filled. The model starts decoding from a start observation $\vx_0$, queries the memory with the resulting key $\vk_1 = R(\mW_k \enc(\vx_0), 1)$, and decodes the result to form $\hat{\vx}_1$ from $\min_{\vx \in \text{codebook}} \lVert \vx - \hat{\vx} \rVert_2$ where $\hat{\vx}$ is the model's prediction. The model then forms the next key, reads the memory, and decodes to form $\hat{\vx}_2$, and so on until $\hat{\vx}_L$. We call this free-running decode loop the \emph{relay}. Notably, we use no teacher forcing---the model must stay on track through the entire $L$-length sequence to succeed. The test set scores $128 \times L$ tokens, $32{,}768$ at $L = 256$.

\subsection{Multi-episode sequential memory}
\label{app:data_multiep}
\begin{figure}[H]
    \centering
    \includegraphics{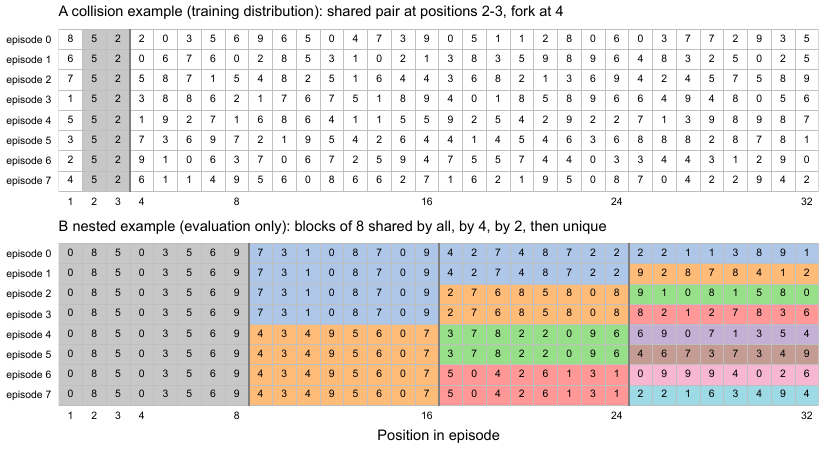}
    \caption{Multi-episode examples. \textbf{A}: A single training example. \textbf{B}: An evaluation example for the contextual inference task from Section~\ref{sec:multiepisode}.}
    \label{fig:multi_episode_examples}
\end{figure}

\paragraph{Generator, collision examples (training and evaluation).} An example is $K = 8$ episodes of $\ell = 32$ tokens, written into a single stream of length $L = 256$. Within-episode tokens are drawn uniformly from a vocabulary of size $V$. All $K$ episodes of an example have the same two tokens at within-episode positions $2$ and $3$. The predecessors at position $1$ are pairwise distinct across the episodes, as are the successors at position $4$. At the end of the shared block the pair (position, token) is identical in every episode, so the successor is ambiguous without context, and a memory keyed on content and position alone reads out the superposition of the $K$ successors. Figure~\ref{fig:multi_episode_examples}A shows an example.

\paragraph{Generator, nested examples (evaluation only).} The contextual inference experiment evaluates DeltaNet-PC on a harder task structure. See Figure~\ref{fig:multi_episode_examples}B for an example. The episode is four blocks of $8$ tokens ($\ell=32$). Block $1$ is shared by all $8$ episodes, block $2$ by $4$, block $3$ by $2$, and block $4$ is unique in each episode. Sibling groups differ at the first token of their block. These examples are generated at evaluation time only.

\paragraph{Episode context vectors.} The dataset provides each episode with a context vector $\vc \in \R^{D}$, which the model consumes as the context tag on the key (Appendix~\ref{app:rope}). Context vectors are unit-modulus phasors---one uniformly random angle per rotation plane. During training the $K$ context vectors are drawn once per run from the run seed and shared by every training example. At validation and test, \emph{every example} draws fresh context vectors, so all reported multi-episode numbers are for never-seen context vectors. Rotary positions for the learned RoPE frequencies restart at every episode start. Recall that in the multi-episode setting, keys are defined as $\vk_t = R(\mW_k\,\enc(\vx_{t-1}), t, \vc)$, where $R$ applies a rotation in each plane of the key vector by the learned RoPE frequency for that plane, scaled by the position $t$ and the context vector $\vc$.

\paragraph{Sizes and metric.} Splits are $1{,}024$ / $128$ / $128$ examples. Recall begins at the start of an episode and proceeds through the episode, as in the single-episode setting. Every position is scored: $128 \times 256 = 32{,}768$ tokens on the test set.

\subsection{MQAR}
\label{app:data_mqar}

\paragraph{Generator.} We use the community-standard multi-query associative recall stream of \citet{arora2024} in its hardest published setting. A stream has $512$ tokens and a vocabulary of $8{,}192$. The first $128$ tokens are $64$ key-value pairs, and the remaining $384$ tokens are the query section: each of the $64$ keys appears exactly once more and every other position holds the filler token $0$. The label at a query position is that key's value, which is the target of the prediction made after reading the query key, while the next input token is the filler. Every other position is unlabelled. The model writes every position of the stream, filler and queries included, and is supervised and scored only at the $64$ labelled positions.

\paragraph{Observations.} Tokens are drawn from rows of an $8{,}192 \times 256$ Gaussian codebook for DeltaNet-PC. The DeltaNet baseline receives its embedding table input. Decoding is performed by finding the nearest of all $8{,}192$ rows by Euclidean distance for DeltaNet-PC and argmax of the logits for the DeltaNet baseline.

\paragraph{Sizes.} We train on Zoology's full $100{,}000$-stream training set, early-stopped on validation set recall accuracy.
The test set is Zoology's $3{,}000$-stream test segment. Early stopping uses our own $128$-stream validation split.

\paragraph{Metric.} Accuracy at the labelled positions of the test set, $3{,}000 \times 64 = 192{,}000$ scored positions, at $D = 256$. The answer at a query is read from the memory before that position is written.

\subsection{Omniglot-MNIST key-value binding}
\label{app:omniglot_mnist}

\begin{figure}[H]
    \centering
    \includegraphics{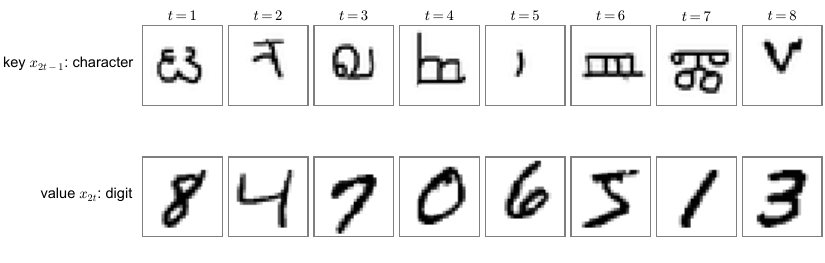}
    \caption{Omniglot-MNIST key-value memory example.}
    \label{fig:omniglot_mnist_example}
\end{figure}

\paragraph{Images.} Keys are Omniglot characters \citep{lake2015} and values are MNIST digits \citep{lecun1998}, both at $28 \times 28$ pixels and flattened to $\R^{784}$. Omniglot images are inverted to bright strokes on a dark background, matching MNIST. Every image is centered by subtracting its own pixel mean, then the mean image of the training pool of its type is subtracted (mean for characters, separate from the mean for digits), and the result is scaled to unit norm. The value vocabulary is one fixed MNIST exemplar per digit---the first occurrence of each digit in the MNIST training set.

\paragraph{Character pools.} Character classes are disjoint by split. Training draws from $964$ background archive classes, and validation and test draw from disjoint halves of the $659$-class evaluation archive ($329$ and $330$ classes), so every character at test time belongs to a class the model never saw in training. Each class has $20$ exemplars, and each character in an example is drawn at random.

\paragraph{Stream layout.} An example consists of $P = 128$ characters from $128$ distinct classes, each once, as interleaved (character, digit) image pairs in a shuffled order, so the stream has $L = 2P = 256$ observations and fills a memory of width $D = 256$. Each character is bound to a digit, where digits are drawn uniformly with replacement, so many characters (keys) share a value. Figure~\ref{fig:omniglot_mnist_example} shows an example.

\paragraph{Sizes and metric.} Splits are $1{,}024$ / $128$ / $128$ examples, and three seeds. After the stream is written, every character (key) of the example is presented again as a query at the final position and the digit read from the finished memory is decoded by the nearest of the ten digit prototypes. The test set scores $128 \times 128 = 16{,}384$ probes.

\section{Architecture, Training, and Experimental Details}
\label{app:training}
In this section, we provide additional details on the architecture, training, and experimental setup used in our work. We refer to our default configuration (Algorithm~\ref{alg:training} with the hyperparameters of Table~\ref{tab:hparams}) as the \emph{recipe}.

\providecommand{\vpsi}{\boldsymbol{\psi}}

\subsection{Rotary position code}
\label{app:rope}

\paragraph{Definition.} Write the $D$ coordinates of a key as $D/2$ planes $(k_i, k_{i + D/2})$. The rotary code $R(\vk, t)$ rotates plane $i$ by the angle $t\,\theta_i$,
\begin{equation}
R(\vk, t)_i \;=\; \begin{pmatrix} \cos t\theta_i & -\sin t\theta_i \\ \sin t\theta_i & \cos t\theta_i \end{pmatrix} \begin{pmatrix} k_i \\ k_{i + D/2} \end{pmatrix},
\qquad
\boldsymbol{\theta}_R = (\theta_1, \dots, \theta_{D/2}) \in [0, \pi]^{D/2}.
\label{eq:rope_def}
\end{equation}
This is rotary positional encoding \citep{su2024} in its non-interleaved form. $R(\cdot, t)$ is orthogonal, so it preserves the key norm, and rotations in the same planes compose by adding angles. Queries are keys, $\vq_t = \vk_t = R(\mW_k\,\enc(\vx_{t-1}), t)$, and the write, the learn-pass read and the recall relay all call the same map, so write and read addresses agree by construction. The official MQAR row uses no rotary code to be consistent with DeltaNet \citep{yang2024c}, so our key is defined as $\mW_k\,\enc(\vx_{t-1})$.

\paragraph{Learned frequencies.} The frequencies are parameters of the model, one scalar per plane in radians per position. They are initialized at the standard table $\theta_i = b^{-2(i-1)/D}$ with $b = 10^4$, so a fresh model addresses exactly like standard RoPE, and they are updated by SGD, after which each $\theta_i$ is projected onto $[0, \pi]$. A band at $0$ ignores position and a band at $\pi$ alternates every position. Nothing in the recipe depends on the sequence length.

\paragraph{How the frequencies learn.} The frequencies appear in exactly one energy term, the address prior of the learn pass, $\tfrac{\lambda_k}{2}\lVert\vk - R(\mW_k\,\enc(\vx^{\text{in}}), p)\rVert^2$ at the query position $p$. They are held constant on every other path, so they only learn from retrieval error. In plane $i$ write the content component as the complex number $a_i$ and the settled address as $k_i$. When the settled address points at the address the same content was written under at position $t_w$, $k_i = e^{\mathrm{i} t_w \theta_i} a_i$, the gradient is
\begin{equation}
\frac{\partial E^{\text{learn}}}{\partial \theta_i} \;=\; -\,\lambda_k\, p\,\lvert a_i \rvert^2 \sin\bigl(\theta_i\,(t_w - p)\bigr).
\label{eq:rope_grad}
\end{equation}
Each band therefore behaves as a phase-locked loop between the query position and the write position, with a force proportional to $p$. Where the task places its queries decides what the frequencies learn (Figure~\ref{fig:rope_spectra}):
\begin{itemize}
\item \textbf{Sequential recall} queries each transition at its own position, so the binding just written exerts no force. What moves the frequencies is interference with other occurrences of the same token stored at other positions. The $76$ bands that start slower than one cycle over the sequence are lifted by a median factor of $170$ to $230$ across seeds, to a spectrum near one radian per position, and $4$ to $6$ bands end static. In brief, the code speeds up to avoid collisions of the same token at other positions, thus improving address orthogonality.
\item \textbf{Multi-episode recall} does the same over the episode: the $105$ bands slower than one cycle per episode are lifted by a median factor of $23$ to $29$, and $13$ to $15$ end static.
\item \textbf{Omniglot binding} reads every pair at the final position, after all writes, so $t_w < p$ for every stored pair and the force lowers the slow bands. Of the $128$ bands, $88$ to $94$ end below $10^{-3}$ radians per position, while the fastest $30$ or so keep values near their initialization. The address stops depending on when a pair was written, which is what a content-addressed probe needs.
\end{itemize}

We emphasize that we were very intentional about designing a system that could \emph{learn} to use positional information when needed and ignore it when not needed. The alternative is to hard-code the RoPE frequencies based on task need, which would make our system less general. In general, we made efforts to remove inductive biases, wherever possible.

\begin{figure}[htbp]
    \centering
    \includegraphics{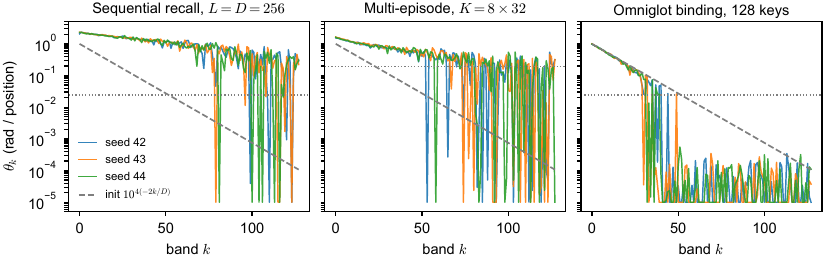}
    \caption{Learned rotary frequencies of the three rows that use them, three seeds each, against the base-$10^4$ initialization (dashed). The dotted line marks one cycle per horizon: $2\pi/256$ on the sequential and Omniglot rows and $2\pi/32$, one episode, on the multi-episode row. Sequential and multi-episode recall raise the slow bands, and Omniglot binding, which reads after every write, sends most of them to zero.}
    \label{fig:rope_spectra}
\end{figure}

\subsection{Context vectors and the multi-episode protocol}
\label{app:context_vectors}

\paragraph{Context vectors.} Every episode $e$ of a multi-episode example carries a context vector $\vc_e \in \R^{D}$ made of $D/2$ unit phasors: draw $\vpsi_e \sim \mathcal{U}(-\pi, \pi)^{D/2}$ and set $\vc_e = (\cos \vpsi_e, \sin \vpsi_e)$. During training the $K$ context vectors are drawn once per run from the training seed and shared by every training example. At validation and test every example draws fresh context vectors, so every reported multi-episode number is recall under contexts never seen in training.

\paragraph{How a context vector enters the key.} The context vector is present at every position of its episode. It does not pass through the encoder or the key map. It multiplies each plane of the key as a complex number, $(k_i + \mathrm{i}\,k_{i + D/2})(\cos \psi_{e,i} + \mathrm{i} \sin \psi_{e,i})$, which is a rotation of plane $i$ by $\psi_{e,i}$, the same planes RoPE rotates. With positions counted within the episode, the address of the transition at within-episode position $p$ of episode $e$ is
\begin{equation}
\vk \;=\; R\bigl(\mW_k\,\enc(\vx_{t-1}),\; p\,\boldsymbol{\theta}_R + \vpsi_e\bigr),
\label{eq:context_vector_compose}
\end{equation}
so position and context compose by adding phases. We wrote this as $R(\cdot, t, \vc)$ in \S\ref{sec:arch}. Both factors are orthogonal and commute, so the tag changes neither the key norm nor anything else in the recipe, and a single-episode run is the case $\vpsi = \mathbf{0}$. A read under the same context vector as the write cancels the tag exactly in the readout, $R(\vk, \vpsi_e)^{\top} R(\vk', \vpsi_e) = \vk^{\top}\vk'$. A read under a different context vector sees a random rotation in every plane, $\sum_i \lvert k_i \rvert^2 \cos(\psi_{e,i} - \psi_{e',i})$, which averages to zero, so keys from different contexts are nearly orthogonal. Multiplying a vector componentwise by random unit phasors is the binding operation of Fourier holographic reduced representations \citep{plate1995,kanerva2009}. What is new here is its use as the context channel of a least-squares fast-weight memory with contexts never seen in training, and the inference of a withheld context by settling (\S\ref{app:ctx_inference}).

\paragraph{Protocol.} The $K$ episodes of an example are written as one stream of length $K\ell$ into one memory, which is not reset inside the example. Rotary positions restart at every episode start, and the first key of every episode is formed from one fixed start observation, the same for all episodes, examples and splits, so the first keys of different episodes differ only through their context vectors. Recall provides each episode with its context vector and runs the decode relay against the finished memory, exactly as in the single-episode protocol.

\subsection{Contextual inference by settling}
\label{app:ctx_inference}

\paragraph{Examples.} An example is $K = 8$ episodes of $\ell = 32$ tokens written into one memory under $8$ context vectors. In the \emph{collision examples}, the training distribution, every episode has its own first token, all eight share the tokens at positions $2$ and $3$, and each has its own token at position $4$, so the shared block is a collision only the context vector resolves. In the \emph{nested examples} each episode is four blocks of eight tokens arranged as a binary tree: all eight episodes agree on block one, groups of four on block two, pairs on block three, and every episode has its own block four (Figure~\ref{fig:multi_episode_examples}). Nested examples are only used at evaluation, on the models trained on collision examples.

\paragraph{Settling (i.e., inference).} The memory is filled under the true context vectors and then held fixed. For each query episode, the context vector is withheld, and a context \emph{latent} $\vc \in \R^{D}$ is inferred from the episode's first $n$ transitions, revealed one at a time. For each revealed transition $(\vx_{p-1}, \vx_p)$ the value is pinned at the encoder's output, $\vv_p = \enc(\vx_p)$, which is what the write stored, and the address is taken to equal its prediction under the current context latent, the limit of infinite address-prior precision. The context latent then settles on the read error of the value,
\begin{equation}
E_p(\vc) \;=\; \tfrac12 \bigl\lVert \enc(\vx_p) - \mS_L\, R\bigl(\mW_k\,\enc(\vx_{p-1}),\, p\,\boldsymbol{\theta}_R + \vpsi(\vc)\bigr) \bigr\rVert^2,
\label{eq:ctx_energy}
\end{equation}
where $\vpsi(\vc)$ reads $\vc$ as $D/2$ phasors.

We maintain the outer-product matrix $\mG = \sum_{e} \hat{\vc}_e \hat{\vc}_e^{\top}$ of the stored context vectors, where $\hat{\vc}_e = \vc_e/\lVert\vc_e\rVert$, a $D \times D$ matrix, and their mean $\bar{\vc}$. The context latent starts at $\bar{\vc}$, is carried from transition to transition, and is settled by Eq.~\ref{eq:ctx_energy} with updates given by $\vc \gets \vc - 2\,\mG\,\nabla_{\vc} E_p(\vc)$, which keeps the inferred context in the span of the stored context vectors. A new transition gets $64$ settling steps, followed by one sweep over the previously observed transitions with $4$ settling steps per transition, so that the context latent integrates the evidence of the whole prefix instead of updating solely based on whichever transition it saw last. The episode is then recalled by the decode relay under the settled context latent and scored token by token. The state never grows with $K$ or $L$: the context latent, the outer-product matrix, and the mean are $D + D^2 + D$ numbers.

\paragraph{The ideal observer.} The reference knows which stored episodes are consistent with the revealed prefix and commits to one of them uniformly at random. Its expected recall is the mean token overlap between the consistent episodes and the target. On nested examples it is an upper bound.

\paragraph{Results.} The top panel of Figure~\ref{fig:context_inference_heatmap} shows a single example, and the bottom panel shows the mean recall over 96 evaluation examples. In particular, we use $32$ $8$-episode examples from models trained under 3 seeds (i.e., $32\cdot3=96$ evaluation examples). Under the true context vector every episode is recalled exactly. Under a zero context vector recall is at the floor, at $0.093$ for nested examples.

\subsection{Evaluation details}
\label{app:eval_details}
Reported scores are computed on the test split for each task. Models are trained on a training split and early stopping is applied based on a validation set.

\paragraph{Write-rule comparison (Table~\ref{tab:write_ladder}).} The task is single-episode sequential recall at $L = D = 256$ (Appendix~\ref{app:data_seq}). We evaluate under three random seeds. Each write rule starts from an empty memory and at each position, we evaluate the feedforward key $\vk_t = R(\mW_k\,\enc(\vx_{t-1}), t)$ and value $\vv_t = \enc(\vx_t)$. The three rules are the outer product $\mS_t = \mS_{t-1} + \vv_t\vk_t^{\top}$, the delta rule $\mS_t = \mS_{t-1} + (\vv_t - \mS_{t-1}\vk_t)\vk_t^{\top}/\lVert\vk_t\rVert^2$ with a unit step, and RLS (Equation~\ref{eq:rls_update}). In the \emph{untrained} systems every learning rate is zero, so the encoder, decoder, key map and RoPE frequencies keep their initial values. The \emph{trained} systems are three seeds at $L = D = 256$, each trained from scratch with its write rule and tested with the same rule. For NLMS this removes the stored-pairs term from the energy (the NLMS ablation of Appendix~\ref{app:ablations}). The outer product rule with its keys normalized, $\mS_t = \mS_{t-1} + \vv_t\vk_t^{\top}/(\epsilon + \lVert\vk_t\rVert^2)$, recalls $0.72 \pm 0.01$ when trained from scratch. We also hold the maps trained with RLS fixed and swap only the write rule at test time, which isolates the rule from what training learned. This gives $0.42 \pm 0.05$ for the outer product and $0.51 \pm 0.02$ for NLMS, so again only RLS is exact.

Recall uses the same free-running relay of Appendix~\ref{app:data_seq}, with one change: it does not decode through $\dec$. An untrained decoder is a random map, and it will perform at chance accuracy ($0.10$) even when handed the correct stored value. So we match each memory read by cosine similarity to the nearest of the $V = 10$ encoded tokens $\enc(\vx_j)$ and feed that token's observation forward to form the next key. This clean-up step is the classical readout of a linear associative memory \citep{kohonen1972}. We use it for every system so that the comparison isolates the write rule and the learned address code. We report the mean and sample standard deviation over the three seeds.

\paragraph{Comparison with DeltaNet (Table~\ref{tab:deltanet_comparison}).} Both models see the same MQAR streams (Appendix~\ref{app:data_mqar}) at width $256$ and are scored on the $192{,}000$ test queries. Model predictions are compared to the vocabulary entries. For DeltaNet-PC this means finding the nearest codebook row to the decoded output. For DeltaNet it means the argmax of the logits.

\emph{DeltaNet.} We train the DeltaNet of \citet{yang2024c} as implemented in Zoology's code \citep{arora2024} with the \texttt{fla} kernels.\footnote{\url{https://github.com/HazyResearch/zoology}} The model has two layers, two heads of width $128$, short convolutions, no MLP, and $d_{\text{model}} = 256$. Its learned token embedding is tied to the output layer. Training follows Zoology's code: AdamW with weight decay $0.1$, a cosine schedule over $64$ epochs, batch $128$, and learning rate $4.6 \times 10^{-4}$.

\emph{DeltaNet-PC.} We use our default hyperparameters (Table~\ref{tab:hparams}) with any MQAR deviations listed in Appendix~\ref{app:hparams}. The answer at a labelled position $t$ is decoded from $\mS_{t-1}\vk_t$, which is before position $t$ is written. During testing, each write first runs a $16$-step settle of the value latent on Equation~\ref{eq:full_energy}, with the stream token clamped at the output, the key at its feedforward value, and the memory update re-solved in closed form at every step.

\emph{Training streams to exact recall.} Recall that we evaluated how few training streams are needed for exact recall. We train each model on the first $n$ of Zoology's training streams and score it on the full test set. DeltaNet still trains for $64$ epochs of $782$ batches of $128$, with batches drawn from the $n$ streams. Its test score is taken at the epoch with the best accuracy on our $128$-stream validation split.

\emph{RoPE on and off.} RoPE is off on MQAR only to match DeltaNet, whose implementation has no position code. In general, we would prefer to keep learnable RoPE on because it is the more general system that uses no task-specific information (i.e., fewer inductive biases). Table~\ref{tab:mqar_rope} shows what this choice does. With RoPE off, the untrained RLS memory retrieves every pair when each read is matched to the nearest encoded codebook row by cosine similarity (the clean-up readout of Table~\ref{tab:write_ladder}), and training mainly fits the decoder. With RoPE on, as in every other task, the untrained memory retrieves $57\%$ of pairs under the same readout, because the rotary code at initialization separates a query from the key it was written under. Training then learns to make the keys position-invariant. In the three full-data runs, $70$ to $74$ of the $128$ rotary bands end up turning less than half a radian over the stream ($31$ at initialization), and $50$ to $53$ end at exactly zero. With RoPE on, $1{,}024$ training streams give $2$ errors in $192{,}000$ queries (one seed). With our fixed codebook at its input and output, and a learned $d \times d$ map before the output in place of our decoder, DeltaNet still needs $1{,}024$ streams to be exact ($182$ errors at $256$ streams and $6$ at $512$, one seed).

\begin{table}[ht]
\centering
\small
\caption{MQAR test accuracy of DeltaNet-PC at $D = 256$ with RoPE off (the setting of Table~\ref{tab:deltanet_comparison}) and on (as in every other task). \emph{Clean-up}: each memory read is matched to the nearest encoded codebook row by cosine similarity, with no settle, as in Table~\ref{tab:write_ladder}. \emph{Decoder}: the protocol of Table~\ref{tab:deltanet_comparison}, with the $16$-step test-time settle. Untrained and full-data cells are means over $3$ seeds, and the $32$-stream cells are one seed.}
\label{tab:mqar_rope}
\begin{tabular}{@{}lcccc@{}}
\toprule
 & \multicolumn{2}{c}{Untrained} & \multicolumn{2}{c}{Trained (decoder)} \\
\cmidrule(lr){2-3} \cmidrule(lr){4-5}
 & Clean-up & Decoder & $32$ streams & $100{,}000$ streams \\
\midrule
RoPE off & $1.000$ & $0.667$ & $1.000$ & $1.000$ \\
RoPE on & $0.570$ & $0.043$ & $0.964$ & $0.999$ \\
\bottomrule
\end{tabular}
\end{table}

\emph{Locality.} A rule is temporally local when no gradient crosses positions. In DeltaNet-PC, the write depends only on $\vk_t$, $\vv_t$, $\mS_{t-1}$ and $\mP_{t-1}$, and each parameter gradient depends only on the settled prediction errors at the current position. A rule is spatially local when each weight's update is the product of the activity and the prediction error at the two ends of that weight. The one exception is LayerNorm's mean and variance, which pool over the units of a layer. DeltaNet is trained by backpropagation through time and through its layers, so its rule is neither local in time nor in space.

\emph{Parameters and state.} Parameters count every trained weight. DeltaNet's $0.53$\,M excludes its $8{,}192 \times 256$ token embedding ($2.10$\,M, with $2.63$\,M in total). DeltaNet learns its embedding and DeltaNet-PC  uses a fixed codebook. DeltaNet-PC's $197{,}376$ parameters are $\mW_{\enc}$, $\mW_k$ and the decoder weight ($256 \times 256$ each), plus the decoder bias and LayerNorm. There are no RoPE frequencies because RoPE is off on MQAR. State counts fast-weight floats. For DeltaNet this is two layers times two heads of $128 \times 128$, or $65{,}536$. For DeltaNet-PC it is $\mS$ plus the RLS inverse $\mP$, or $2 \times 256^2 = 131{,}072$. The value accumulator $\mC^v$ of Equation~\ref{eq:reg_v} is an optional $D \times D$ matrix that can be used by the learn pass and the test-time settle, but is not needed (see Appendix~\ref{app:ablations}).

\emph{Peak training memory.} This is the peak GPU memory of one training step at batch $32$, on random streams in the MQAR layout with the stream length $L$ varied. For DeltaNet we record \texttt{torch.cuda.max\_memory\_allocated} over the forward pass, the backward pass and the AdamW step. We measure a leaner variant that reads our fixed codebook instead of the embedding, computes logits only at the $64$ labelled positions, and uses no activation checkpointing. Block-level activation checkpointing trims DeltaNet's peak by about a quarter, but the peak still grows linearly with $L$ and reaches $81$\,GB at $L = 131{,}072$. Truncated BPTT would bound the memory, but it gives up credit assignment across truncation boundaries and is still not local in space. For DeltaNet-PC we record JAX's peak bytes in use during training, learn pass included. The trainer processes the stream in chunks of $512$ positions and carries the $D \times D$ state across chunks. Both models use the $8{,}192$-token vocabulary at every $L$.

\subsection{Compute resources}
Experiments ran on a combination of a single Nvidia RTX 4090 (24 GB) for development and primarily Nvidia H100 (80GB and 94GB variants) GPUs for multi-run experiments involving a range of configurations and 3 random seeds, though some V100s were also used. Individual runtimes range from a few minutes to about 5.5 hours depending on the size of the model and difficulty of the task, but median runtimes are about 10 minutes.

Our work is built on JAX \citep{jax2018github} and Equinox \citep{kidger2021equinox}, and DeltaNet-PC runs in single precision.

\section{Hyperparameters}
\label{app:hparams}

Table~\ref{tab:hparams} lists the default hyperparameter values used throughout the paper. We made an effort to use a single set of hyperparameters across all experiments, not only to demonstrate generality of the recipe, but also to ease future development efforts for any researchers who wish to build on this work.

\begin{table}[h]
\centering
\small
\footnotesize\setlength{\tabcolsep}{4pt}
\begin{tabular}{@{}llrp{6.0cm}@{}}
\toprule
\textbf{Symbol} & \textbf{Name} & \textbf{Value} & \textbf{Description} \\
\midrule
\multicolumn{4}{@{}l}{\emph{Task}} \\
$L$ & Sequence length & 256 & Positions per sequence (swept in Figure~\ref{fig:single_ep_seq}). \\
$V$ & Vocabulary size & 10 & Distinct tokens, each a fixed Gaussian code of dimension $d_{\text{obs}}$. 8,192 for MQAR. \\
$d_{\text{obs}}$ & Observation dimension & 256 & Codebook row dimension (images: pixel count). \\
$n_{\text{train}}$ & Training sequences & 1024 & Unique training sequences. \\
$n_{\text{test}}$ & Test sequences & 128 & Held-out sequences. The validation split is the same size. \\
$B$ & Batch size & 32 & Sequences per parameter update. \\
\midrule
\multicolumn{4}{@{}l}{\emph{Architecture}} \\
$D$ & Latent and memory width & 256 & Dimension of $\vk_t$ and $\vv_t$, and the side of $\mS_t, \mP_t \in \mathbb{R}^{D \times D}$. \\
$\mW_{\enc}$ & Encoder & 1 & Linear layers of $\enc$ (bias-free, $\mathbb{R}^{d_{\text{obs}}} \to \mathbb{R}^{D}$). \\
$\mW_{\dec}$ & Decoder layers & 1 & LayerNorm then one linear layer $\mathbb{R}^{D} \to \mathbb{R}^{d_{\text{obs}}}$. \\
$\mW_k$ & Key map init & identity & Initialization of the square key map $\mW_k$ (the address starts at $\enc$'s geometry). \\
$b$ & RoPE base (init) & 10000 & Initialization of the $D/2$ learnable frequencies, $\theta_i = b^{-2(i-1)/D}$. \\
\midrule
\multicolumn{4}{@{}l}{\emph{Memory write}} \\
$\epsilon$ & Ridge & 0.25 & Prior precision on $\mS$ (Eq.~\ref{eq:joint_dist}), the RLS ridge. \\
$T_{\text{write}}$ & Settling steps at the write & 0 & The write uses the feedforward $\vk_t$, $\vv_t$ (exact minimizer of Eq.~\ref{eq:energy_t}). \\
$T_{\text{write}}^{\text{test}}$ & Settling steps at the write (test) & 0 & Same at evaluation (a 16-step test settle only where noted). \\
\midrule
\multicolumn{4}{@{}l}{\emph{Learn pass}} \\
$T$ & Settling steps & 16 & Gradient steps per position on the learn-pass energy, over $(\tilde{\vk}_t, \tilde{\vv}_t)$. \\
$\eta_x$ & Settling step size & 0.02 & Gradient-descent step on the latents during settling. \\
-- & Activity gradient clip & 2 & Bound on each component of the latent gradient (elementwise, so each unit clips its own error). \\
$\lambda_{\text{ev}}$ & Evidence precision & 1 & Weight of $\tfrac12\lVert\vv_t - \enc(\vx_t)\rVert^2$, which trains $\enc$. \\
$\lambda_{k}$ & Address-prior precision & 1 & Weight of $\tfrac12\lVert\tilde{\vk}_t - R(\mW_k\,\enc(\vx_{t-1}), t)\rVert^2$, which trains $\mW_k$ and $R$. \\
\midrule
\multicolumn{4}{@{}l}{\emph{Regularizers}} \\
$\lambda_v$ & Value orthogonality & 10 & On $\mathcal{R}^{\vv}_t$ (Eq.~\ref{eq:reg_v}). \\
\midrule
\multicolumn{4}{@{}l}{\emph{Parameter learning}} \\
$\eta_\theta$ & Learning rate ($\enc$, $\dec$) & 0.00025 & Gradient descent on $\mW_{\enc}$, $\mW_{\dec}$ and LayerNorm, one step per batch of $B$. \\
$\eta_k$ & Learning rate ($\mW_k$) & 0.05 & SGD on the key map. \\
$\eta_R$ & Learning rate (RoPE) & 0.001 & SGD on the frequencies, projected onto $[0, \pi]$ after each step. \\
$L_{\text{ref}}$ & Position-mean reference & 256 & Every parameter gradient ($\mW_{\enc}$, $\mW_{\dec}$, LayerNorm, $\mW_k$, $R$) is the mean over a sequence's queries times $L_{\text{ref}}$, then its group's rate applies. \\
$D_{\text{ref}}$ & Width-scale reference & 256 & The gradient of $\mW_{\dec}$ alone is further scaled by $\min(1, (D_{\text{ref}}/D)^{p})$. \\
$p$ & Width-scale power & 2 & Exponent of the width scale ($\mW_{\dec}$ only). \\
\midrule
\multicolumn{4}{@{}l}{\emph{Early stopping}} \\
-- & Metric & recall\_accuracy & Validation recall (binding accuracy on the key-value rows). \\
-- & Interval & 1 & Every $N$ epochs (every 32 batches on the official MQAR). \\
-- & Patience & 25 & Evaluations without improvement (cap 400 epochs). The best checkpoint is tested. \\
\bottomrule
\end{tabular}

\caption{DeltaNet-PC default hyperparameters.}
\label{tab:hparams}
\end{table}

Here we list any deviations from the default hyperparameters.

\paragraph{MQAR.} MQAR uses 64 settling steps on the learn pass during training. At test-time, it uses 16 settling steps on the value latent and memory writes when updating the memory, in contrast to the other experiments that perform no settling when writing to memory at test time. We disable RoPE for MQAR, consistent with the Zoology DeltaNet implementation.

\paragraph{Omniglot.} Omniglot uses 64 settling steps on the learn pass during training.

\section{Ablations}
\label{app:ablations}

We remove one component of the full recipe at a time and retrain from scratch on each task, holding everything else fixed. Each ablation is a single seed, and the full recipe row is the task's reference. Table~\ref{tab:ablations} reports test accuracy.

\begin{table}[H]
\centering
\small
\caption{Ablations of several system components---each row removes one component from the full recipe. Test accuracy on sequential recall ($L = D = 256$), multi-episode recall ($K = 8 \times 32$), MQAR ($D = 256$) and Omniglot-MNIST binding with $128$ keys. On Omniglot the full recipe makes $2$ errors in $16{,}384$ probes and the variant without the clip makes $1$. The official MQAR row runs with RoPE off, so freezing it does not apply there.}
\label{tab:ablations}
\resizebox{\ifdim\width>\textwidth\textwidth\else\width\fi}{!}{%
\begin{tabular}{@{}lcccc@{}}
\toprule
 & Sequential & Multi-episode & MQAR & Omniglot \\
\midrule
Full recipe & $1$ & $1$ & $1$ & $0.9999$ \\
No activity clip & $1$ & $1$ & $1$ & $0.9999$ \\
No value orthogonality ($\lambda_v = 0$) & $1$ & $1$ & $1$ & $0.926$ \\
NLMS write (no interference term) & $0.282$ & $0.178$ & $0.978$ & $0.807$ \\
Frozen RoPE & $0.359$ & $0.689$ & -- & $0.940$ \\
\bottomrule
\end{tabular}}
\end{table}

\paragraph{No activity clip.} The settling gradient is not clipped. The recipe clips each of its components to magnitude $2$. Removing it has no effect on task performance. We nevertheless keep the clip to safeguard against loss spikes.
\paragraph{No value orthogonality.} $\lambda_v = 0$, which removes $\mathcal{R}^{\vv}_t$ and its leave-same-item-out correction from the energy (\S\ref{app:value_ortho}). This regularizer has no impact on the token-based tasks but results in an approximately $7$-point reduction in accuracy on Omniglot, consistent with the idea that image data is more correlated than the token codebook rows. It is therefore an optional regularizer that may or may not be needed but we keep it active in all experiments for consistency.
\paragraph{NLMS write.} The stored-pairs term $\tfrac12\Tr(\Delta\mS_t\mC_{t-1}\Delta\mS_t^{\top})$ is removed from the energy. The minimizer of what remains is $\Delta\mS_t = \vr_t\vk_t^{\top}/(\epsilon + \lVert\vk_t\rVert^2)$, the normalized delta rule of Table~\ref{tab:write_ladder} with unit step and regularization constant $\epsilon$. The write keeps its ridge but no longer protects past keys. We observe a drop in accuracy on all tasks, which provides further evidence for the value of the RLS objective.
\paragraph{Frozen RoPE.} The rotary frequencies stay at their initialization, the standard RoPE schedule with base $10^4$, instead of being learned. MQAR does not use RoPE, consistent with \citet{yang2024c}'s implementation. For the remaining tasks, we observe an intuitive drop in accuracy. The single episode sequential task is heavily reliant on position because sequences are of length 256 and therefore, keys must learn to orthogonalize to avoid collisions and rotating key planes is a mechanism for achieving this. The multi-episode sequential task relies on episodes of length $\ell=32$ and is slightly less reliant on learned rotation frequencies. Finally, Omniglot is a position-invariant task, and therefore even less reliant on learning fast rotation frequencies and the standard RoPE initialization is already largely sufficient.

\section{Compute Cost}
\label{app:compute_cost}

We compare compute on MQAR, the one task on which both DeltaNet-PC and DeltaNet are implemented. Each stream has $L = 512$ tokens and $P = 64$ queries. The width is $D = 256$, the vocabulary has $V = 8{,}192$ tokens, and each token is $D$-dimensional. We count the floating-point operations for one stream, counting a multiply-add as two. For DeltaNet-PC we sum the operations of Algorithm~\ref{alg:training} over the positions of the stream. For DeltaNet we count the recurrent form of its two layers and take the backward pass to cost twice the forward pass. Table~\ref{tab:compute} summarizes the comparison.

\begin{table}[H]
\centering
\small
\setlength{\tabcolsep}{3.5pt} %
\caption{Compute per MQAR stream ($L = 512$, $P = 64$ queries, $D = 256$, $V = 8{,}192$, $T = 64$) and measured time on one H100. DeltaNet-PC's test figures in parentheses include the $16$-step test-time settle, e.g., 0.75 for no test-time settle, and 4.0 with a 16-step test-time settle. Training times are per stream at batch $32$ for DeltaNet-PC and $128$ for DeltaNet. Streams to exact recall count every training stream the model observed, including repeats.}
\label{tab:compute}
\begin{tabular}{@{}lll@{}}
\toprule
 & DeltaNet-PC & DeltaNet \\
\midrule
Training, operations & $O(LD^2 + PTD^2)$, $3.2$ GFLOP & $O(L(D^2 + DV))$, $8.7$ GFLOP \\
Test, operations & $O(LD^2 + PDV)$, $0.75$ ($4.0$) GFLOP & $O(L(D^2 + DV))$, $2.9$ GFLOP \\
Training, time & $18$ ms & $0.42$ ms \\
Test, time & $3$ ($12$--$20$) ms & $0.8$ ms \\
To exact recall & $1{,}024$ observed streams, $3.3$ TFLOP, $18$ s & $200{,}192$ observed streams, $1.7$ PFLOP, $88$ s \\
\bottomrule
\end{tabular}
\end{table}

\paragraph{Operations.} In training, DeltaNet-PC writes every position with the RLS update, at $O(D^2)$ per position. At each of the $P$ queries it also runs a $T$-step learn pass, in which each step is a memory read, a decoder pass and the gradient of that pass, for $O(TD^2)$ per query. A training stream therefore costs $O(LD^2 + PTD^2)$, and with $T = 64$ the learn pass dominates. DeltaNet costs $O(D^2)$ per token in each layer, plus $O(DV)$ per token for its output logits, and backpropagation through time multiplies this by a constant, for $O(L(D^2 + DV))$ per stream. At test time, DeltaNet-PC writes every position at $O(D^2)$ and decodes each query against the $V$ codebook rows at $O(DV)$, for $O(LD^2 + PDV)$ per stream. The $16$-step test-time settle adds $O(16\,LD^2)$. DeltaNet runs its forward pass, at $O(L(D^2 + DV))$. Per stream, the two models are within a small factor of each other in both training and test. The large difference is in how many training streams each model needs. DeltaNet-PC is exact after $32$ updates of $32$ streams, about $3$ TFLOP in total. DeltaNet's fastest route to exact recall is two epochs of the $100{,}000$ training streams, which requires about $1.7$ PFLOP.

\paragraph{Wall-clock.} The operation counts do not carry over to wall-clock time on a GPU. On one H100, DeltaNet-PC trains at $18$ ms per stream and DeltaNet at $0.42$ ms per stream, even though a DeltaNet-PC training stream takes fewer operations. Our implementation scans the stream one position at a time, and at each query the learn pass is a chain of $T$ dependent matrix-vector products on a batch of $32$.
The time to exact recall is nonetheless shorter for DeltaNet-PC: $18$ s of training for DeltaNet-PC and $88$ s for DeltaNet. These times describe our current implementation and future work could improve our DeltaNet-PC code to be more parallel (e.g., chunked updates).

\section{tPC reference}
\label{app:tpc_reference}

The tPC capacity curve was produced by training tPC at $L=8$, $D=256$, with 
a unique $\mathrm{BOS}$ token per training sequence. This is tPC's most favorable condition for memorization: each sequence has its own indexing token, so the slow weights can in principle dedicate distinct capacity to each. We were unable to reliably train tPC at longer $L$. The plot shows recall degrading as $N$ grows, exposing the slow-weight capacity ceiling. We do not present this as a direct comparison against DeltaNet-PC: the two systems target different problems---tPC memorizes a fixed set of training sequences in the slow weights and DeltaNet-PC binds arbitrary unseen sequences into $\mS$ at test time.

\begin{figure}[htbp]
    \centering
    \includegraphics{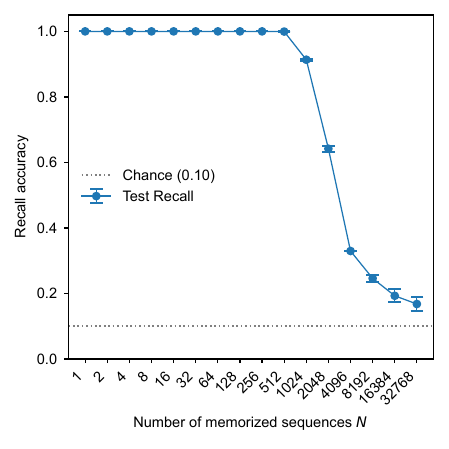}
    \caption{tPC's intrinsic memorization ceiling. Test recall accuracy on the $N$ training sequences as a function of $N$, with a unique $\mathrm{BOS}$ token per training sequence. $L=8$, $D=256$.}
    \label{fig:tpc_capacity}
\end{figure}

\end{document}